%% file: context_poisoning_aaai27-authors.tex
\documentclass[letterpaper]{article}
\usepackage{aaai2027}
\usepackage{times}
\usepackage{helvet}
\usepackage{courier}
\usepackage[hyphens]{url}
\usepackage{graphicx}
\usepackage{natbib}
\usepackage{caption}
\usepackage{nameref}

\usepackage[utf8]{inputenc}
\usepackage[T1]{fontenc}
\usepackage{booktabs}
\usepackage{amsmath,amssymb,amsfonts,amsthm,mathtools}
\usepackage{nicefrac}
\usepackage{xcolor}
\usepackage{tikz}
\usetikzlibrary{
  arrows.meta,
  positioning,
  calc,
  fit,
  shapes.geometric,
  patterns,
  decorations.pathreplacing
}

\title{Context Poisoning as Extreme-Value Attention Interference in Long-Context Language Models}

\author{%
  Meysam Ghaffari, Nina Fatehi, Bhaskar Sen, Nasim Sabetpour, Carlos Morato
}
\affiliations{%
  Optum, UHG, Minneapolis, MN
}

\newtheorem{definition}{Definition}
\newtheorem{theorem}{Theorem}
\newtheorem{corollary}{Corollary}
\newtheorem{proposition}{Proposition}

\newcommand{\E}{\mathbb{E}}
\newcommand{\Prb}{\mathbb{P}}
\newcommand{\1}{\mathbf{1}}

\newcommand{\argmax}{\operatorname*{arg\,max}}

\newcommand{\D}{\mathcal{D}}

\newcommand{\eps}{\varepsilon}

\begin{document}
\maketitle

\begin{abstract}
Large language models can process increasingly long prompts, yet their
ability to locate and use decisive evidence may degrade as irrelevant or
confusable context is added. We formulate this phenomenon, which we call
\emph{context poisoning}, as extreme-value interference in attention:
the decisive-evidence score is upper-bounded, while the maximum score
among effective distractors grows with their number. Under a softmax
retrieval abstraction, we derive a finite-sample upper bound showing
that maintaining a fixed accuracy target above base rate requires the
evidence margin to scale as $\Omega(\sqrt{\log N})$, where $N$ denotes
the effective distractor count rather than necessarily the raw context
length. The analysis connects long-context degradation to score
aliasing, positional aliasing, and softmax dilution. Controlled
experiments show that retrieval accuracy decreases as total context
grows in the presence of embedded hard negatives, that the same-format
condition produces the largest observed accuracy drop among the tested
distractor constructions at fixed context length, and that retrieval
gating can improve evidence use while its net benefit depends on
preserving evidence recall. These results motivate evidence bottlenecks,
alias-resistant representations, retrieve-then-reason architectures,
verifier-mediated memory, and contrastive anti-poison training.
\end{abstract}

\section{Introduction}

Increasing a language model's context window is not the same as increasing its usable memory. Transformer attention gives every token a path to every other token \citep{vaswani2017attention}, and large autoregressive models can exploit prompts for in-context adaptation \citep{brown2020language}. However, long input does not guarantee robust evidence use. Empirical studies report strong position effects and degradation when relevant information is buried in the middle of a long context \citep{liu2024lost}, sensitivity to irrelevant information \citep{shi2023large}, and benchmark-level gaps between nominal and effective context length \citep{bai2024longbench,an2024leval,hsieh2024ruler,du2025context}. This paper uses the term \emph{context poisoning} for a non-training-data phenomenon: the input itself becomes contaminated by distractors, aliases, and positional artifacts as the window grows.

The practical symptom is familiar. A prompt contains a small set of key facts, then many thousands of irrelevant or weakly related tokens. A model that would answer correctly from the short prompt may answer incorrectly from the long prompt, quote a distractor, or ignore the key fact. Existing long-context systems attack the computational side through recurrence, compression, sparse attention, or fast exact attention \citep{dai2019transformerxl,rae2020compressive,beltagy2020longformer,zaheer2020bigbird,choromanski2021rethinking,katharopoulos2020transformers,dao2022flashattention}. Those methods make long contexts feasible, but feasibility alone does not imply robust extraction. Positional methods such as ALiBi, RoPE, interpolation, and YaRN improve extrapolation behavior \citep{press2022train,su2024roformer,chen2023positionalinterpolation,peng2023yarn}, but long-context failures can persist even within claimed windows.

\paragraph{Contributions.} First, we give formal definitions for context poisoning, score aliasing, and effective distractor count. Second, we prove a theorem: in a simple but diagnostic attention model, the probability that a distractor captures enough attention increases with $N$, and maintaining accuracy requires a signal margin that grows like $\sqrt{\log N}$. Third, we derive design principles that turn the bound into system requirements: reduce $N$, increase the evidence margin, verify evidence before decoding, and regularize aliasing. Fourth, we report two controlled empirical studies in the main paper:
a hard-negative retrieval decay curve on a production model
(\nameref{sec:decay}) and a fixed-length distractor-confusability
ablation across three models (\nameref{sec:ablation}). The appendix
provides a retrieve-then-reason gate experiment at context lengths up
to $512$K tokens (\nameref{sec:gate-experiment}), together with the
full proofs, additional implementation details, complete experimental
protocols and supplemental tables, attention-distribution diagnostics,
a confusable-decoy dose-response analysis, and a lost-in-the-middle
reanalysis. Across these components, the paper connects a finite-sample
failure bound to observable changes in distractor count, distractor
composition, and retrieval recall. We do not claim that every LLM
failure is explained by a single attention head or Gaussian scores. The
theorem is a stress test: if a long-context architecture cannot control
the extreme distractor term in this abstraction, it should not be
expected to remain robust merely by scaling the raw context window.

The remainder of the paper is organized as follows. We first formalize
context poisoning and the effective distractor count, establish the
extreme-value attention bound, examine the mechanisms predicted by the
theory, and derive architectural alternatives motivated by the bound.
\nameref{sec:evaluation-protocol} then introduces the evaluation
framework, followed by two empirical studies in the main paper:
\nameref{sec:decay} and \nameref{sec:ablation}. The appendix contains
the retrieve-then-reason gate experiment
(\nameref{sec:gate-experiment}), the full theoretical proofs and
additional implementation details, the complete
distractor-confusability protocol and supplemental results
(\nameref{app:ablation}), the attention-distribution diagnostics and
confusable-decoy dose-response analysis (\nameref{app:attention}), and
the lost-in-the-middle reanalysis (\nameref{app:lostmiddle}).

\section{Related work}

\paragraph{Efficient long-context architectures.}
Long-context language modeling builds on transformers
\citep{vaswani2017attention}, large-scale pretraining
\citep{devlin2019bert,raffel2020exploring}, and in-context learning
\citep{brown2020language}. Transformer-XL and Compressive Transformers
introduced recurrence and memory compression
\citep{dai2019transformerxl,rae2020compressive}. Longformer, Big Bird,
Performer, and linear attention reduce long-sequence cost
\citep{beltagy2020longformer,zaheer2020bigbird,choromanski2021rethinking,katharopoulos2020transformers},
while FlashAttention improves exact-attention efficiency
\citep{dao2022flashattention}. ALiBi, RoPE, positional interpolation,
and YaRN address positional extrapolation and context-window extension
\citep{press2022train,su2024roformer,chen2023positionalinterpolation,peng2023yarn}.
These methods make long contexts feasible, but feasibility does not
ensure that decisive evidence remains distinguishable from many
competitive candidates.

\paragraph{Selection, compression, and memory.}
Retrieval-augmented generation selects a smaller evidence set before
decoding \citep{lewis2020rag,guu2020realm,izacard2021leveraging}, and
prompt compression removes low-utility tokens
\citep{jiang2023longllmlingua}. Attention-sink methods instead preserve
selected states to stabilize streaming generation
\citep{xiao2024streamingllm}. These approaches expose a shared
tradeoff: reducing the candidate set can lower interference, but an
imperfect selector can discard required evidence. Proposition~\ref{prop:gate}
expresses this tradeoff as replacing $N$ by $K$ effective distractors
while paying a recall penalty $\eta$.

\paragraph{Long-context evaluation and failure analysis.}
Lost-in-the-middle effects show that central evidence can be underused
\citep{liu2024lost}, and irrelevant information can distract reasoning
even when the evidence remains present \citep{shi2023large}. Positional
interventions can partly recover evidence use \citep{zhang2024found}.
LongBench, L-Eval, ZeroSCROLLS, and RULER broaden evaluation across
tasks and context lengths
\citep{bai2024longbench,an2024leval,shaham2023zeroscrolls,hsieh2024ruler}.
Recent controlled work further shows degradation despite controlled or
perfect retrieval \citep{du2025context}, reinforcing the distinction
between nominal capacity, evidence retrieval, and evidence use.

\paragraph{Relation to our contribution.}
Our contribution is to formalize one failure mode as extreme-value
competition among distractor scores. We distinguish raw length from
effective distractor count, derive the margin growth required for
reliable evidence use, and connect mitigations to terms in the bound.
The experiments test complementary implications by varying hard-negative
count, distractor composition at fixed length, and the recall--interference
tradeoff of a retrieval gate.

\section{Formalizing context poisoning}

Let a query $q$ request an answer $y^\star$ from a context $C_N=(x_0,\ldots,x_N)$. The index set of relevant evidence is $R\subseteq\{0,\ldots,N\}$, and $D=\{0,\ldots,N\}\setminus R$ are distractors. A model $f_\theta$ induces an answer distribution $p_\theta(y\mid q,C_N)$ and a score map $s_\theta(q,x_i,i)$ used, explicitly or implicitly, to select evidence. The score includes token semantics and position: for a transformer head one may write
\begin{equation}
 s_i = \frac{\langle W_Q h_q, W_K(h_i+p_i)\rangle}{\sqrt{d}},
\end{equation}
where $h_i$ is a contextual token representation and $p_i$ denotes a positional code or bias.

\begin{definition}[Context poisoning curve]
\label{def:curve}
Fix a task distribution $\D$, a short reference length $N_0$, and an augmentation operator $T_N$ that appends $N-N_0$ irrelevant or nuisance tokens while preserving the answer. The accuracy at length $N$ is
\begin{equation}
 A(N)=\Prb_{(q,C_{N_0},y^\star)\sim\D}\left[\argmax_y p_\theta(y\mid q,T_N(C_{N_0}))=y^\star\right].
\end{equation}
The poisoning curve is $P(N)=A(N_0)-A(N)$. Context poisoning occurs on $[N_0,N_1]$ when $P(N)$ is positive and non-negligible for some $N\leq N_1$, even though $T_N$ does not change the ground-truth answer.
\end{definition}

\begin{definition}[Score aliasing and effective distractor count]
\label{def:aliasing}
For a query $q$, relevant item $r\in R$, and distractor $i\in D$, the pair $(r,i)$ is $\eps$-score-aliased if $|s_i-s_r|\leq \eps$. It is $\eps$-position-aliased if the positional contribution obeys
\begin{equation}
 \left|\langle W_Q h_q, W_Kp_i\rangle-\langle W_Qh_q,W_Kp_r\rangle\right|\leq \eps.
\end{equation}
The effective distractor count $N_{\rm eff}(q,C)$ is the number of distractors whose scores have non-negligible upper-tail probability near a relevant score. In the theorem below, $N_{\rm eff}=N$ by construction; in a system with retrieval or gating, $N_{\rm eff}$ is the number of tokens or chunks surviving the gate.
\end{definition}

Figure~\ref{fig:aliasing} summarizes the issue. Long contexts produce many opportunities for an irrelevant token to become close to, or exceed, the relevant score. This can happen semantically, positionally, or through attention-normalization effects. The key mathematical feature is not the average distractor score; it is the maximum distractor score.

\begin{figure*}[t]
\centering
\resizebox{0.8\linewidth}{!}{%
\begin{tikzpicture}[x=1cm,y=1cm,>=Latex,thick, every node/.style={font=\small}]
  \node[anchor=west,font=\bfseries] at (0,3.35) {A. Score aliasing in a long input};
  \draw[->] (0,2.55) -- (11.2,2.55) node[right] {position};
  \foreach \x/\lab in {0.8/$d_1$,1.9/$d_2$,3.0/$d_3$,4.4/$x_\star$,5.8/$d_4$,7.2/$d_5$,8.6/$d_j$,10.0/$d_N$} {
    \node[draw,rounded corners=1pt,minimum width=.72cm,minimum height=.42cm,fill=gray!10] at (\x,2.55) {\lab};
  }
  \node[draw,rounded corners=1pt,minimum width=.72cm,minimum height=.42cm,fill=white,line width=1.0pt] (sig) at (4.4,2.55) {$x_\star$};
  \node[draw,rounded corners=1pt,minimum width=.72cm,minimum height=.42cm,pattern=north east lines] (alias) at (8.6,2.55) {$d_j$};
  \node[draw,rounded corners=2pt,minimum width=.8cm,minimum height=.38cm] (q) at (6.45,3.55) {$q$};
  \draw[->] (q) -- node[midway,above left=-2pt] {$s_\star$} (sig);
  \draw[->,dashed] (q) -- node[midway,above right=-2pt] {$s_j\approx s_\star$} (alias);
  \node[anchor=west] at (0,2.02) {white box = decisive evidence; hatched box = aliased distractor};
  \node[anchor=west,font=\bfseries] at (0,1.35) {B. Extreme-value interference};
  \draw[->] (0,0.25) -- (11.2,0.25) node[right] {attention logit};
  \draw[smooth,domain=1.0:5.2,samples=70] plot(\x,{0.25+0.55*exp(-0.85*(\x-3.0)*(\x-3.0))});
  \draw[smooth,domain=5.7:10.3,samples=70,line width=0.95pt] plot(\x,{0.25+0.47*exp(-0.55*(\x-8.0)*(\x-8.0))});
  \draw (5.55,0.18) -- (5.55,0.34) node[below=5pt] {$s_\star$};
  \draw (8.0,0.18) -- (8.0,0.34) node[below=5pt] {$M_N$};
  \draw[<->] (5.55,0.92) -- node[above] {required margin} (8.0,0.92);
  \node[anchor=west] at (1.85,0.86) {typical distractors};
  \node[anchor=west] at (8.3,1.05) {$M_N=\max_i Z_i$};
  \node[draw,rounded corners=2pt,align=left,anchor=west] at (0,-0.72) {Poisoning event: $M_N$ is large enough that\\ $\alpha_\star=\exp(s_\star/\tau)/\sum_i\exp(s_i/\tau)$ falls below an evidence threshold.};
\end{tikzpicture}%
}
\caption{Score and positional aliasing in a long context. A relevant token $x_\star$ competes not with an average distractor but with the maximum of many distractors. As $N_{\rm eff}$ grows, the right tail creates aliased distractors whose logits can dominate the softmax denominator.}
\label{fig:aliasing}
\end{figure*}

\section{A theorem for attention-mediated poisoning}

The theorem studies a single retrieval bottleneck. This abstraction is intentionally small: many LLM computations are multi-layer and multi-head, yet final answer quality often depends on whether decisive evidence receives enough usable representation mass. The result states that a fixed signal margin cannot protect against an unbounded number of effective distractors.

\begin{definition}[$\rho$-faithful evidence use]
Consider a model with attention mass $\alpha_{\star,N}$ assigned to the decisive evidence. For a threshold $\rho\in(0,1)$ and base accuracy $a_0\in[0,1]$, the decoder is $\rho$-faithful if
\begin{equation}
 \Prb(\widehat{y}=y^\star\mid \alpha_{\star,N}<\rho)\leq a_0.
\end{equation}
Here $a_0$ may be the chance accuracy of a multiple-choice task, the accuracy of a prior-only model, or the accuracy of an abstaining verifier.
\end{definition}

\begin{theorem}[Extreme-value attention poisoning]
\label{thm:poisoning}
Fix temperature $\tau>0$, threshold $\rho\in(0,1)$, base accuracy $a_0\in[0,1]$, and noise scale $\sigma>0$. For each context length $N$, let the context contain one decisive item with attention logit $S_\star\leq \gamma$ almost surely and $N$ effective distractors with independent logits
\begin{equation}
 Z_1,\ldots,Z_N \overset{\rm iid}{\sim} \mathcal{N}(0,\sigma^2).
\end{equation}
Let
\begin{equation}
 \alpha_{\star,N}=\frac{\exp(S_\star/\tau)}{\exp(S_\star/\tau)+\sum_{i=1}^N\exp(Z_i/\tau)}
\end{equation}
be the relevant attention mass, and assume the decoder is $\rho$-faithful. Then the expected accuracy $A_N=\Prb(\widehat{y}=y^\star)$ satisfies
\begin{equation}
\label{eq:main-bound}
 A_N\leq a_0+(1-a_0)\left[\Phi\left(\tfrac{\gamma+\tau\log\tfrac{1-\rho}{\rho}}{\sigma}\right)\right]^N,
\end{equation}
where $\Phi$ is the standard normal cdf. Consequently, if $\gamma,\tau,\rho,\sigma$ are fixed and $a_0$ is fixed, then $\limsup_{N\rightarrow\infty}A_N\leq a_0$. Conversely, for the upper bound to permit $A_N\geq 1-\eps$ with $\eps<1-a_0$, it is necessary that
\begin{equation}
\label{eq:margin-needed}
\begin{aligned}
 \gamma&\geq \sigma\Phi^{-1}\left(\left(\frac{1-\eps-a_0}{1-a_0}\right)^{1/N}\right)-\tau\log\frac{1-\rho}{\rho}\\
 &=\sigma\sqrt{2\log N}+O(1)
\end{aligned}
\end{equation}
for fixed $\eps,a_0,\rho,\tau,\sigma$ as $N\to\infty$.
\end{theorem}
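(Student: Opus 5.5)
Write $\kappa=\tau\log\frac{1-\rho}{\rho}$ and $t=\gamma+\kappa$. The proof has three steps: reduce accuracy to the event $\{\alpha_{\star,N}\ge\rho\}$, bound that event by the maximum distractor logit, and use iid-ness to compute the resulting probability.

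\textbf{Reduction via faithfulness.} Condition on whether $\alpha_{\star,N}\ge\rho$. With $p_N=\Prb(\alpha_{\star,N}\ge\rho)$, the law of total probability and $\rho$-faithfulness give
\[
A_N\le p_N\cdot 1+(1-p_N)\,a_0=a_0+(1-a_0)\,p_N .
\]
It therefore suffices to show $p_N\le[\Phi(t/\sigma)]^N$.

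\textbf{Deterministic containment.} Write $M_N=\max_i Z_i$. Suppose $\alpha_{\star,N}\ge\rho$. Then $\sum_i e^{Z_i/\tau}\le\frac{1-\rho}{\rho}e^{S_\star/\tau}$. Since $e^{M_N/\tau}\le\sum_i e^{Z_i/\tau}$, taking logarithms gives $M_N\le S_\star+\kappa$. Using $S_\star\le\gamma$ almost surely, we get $M_N\le t$.

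This step needs no independence between $S_\star$ and the $Z_i$. The resulting bound is crude: it keeps only the largest term of the denominator. That is exactly what produces the extreme-value form.

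\textbf{Probability of the event.} By independence of the distractor logits,
\[
p_N\le\Prb(M_N\le t)=\prod_{i=1}^N\Prb(Z_i\le t)=\Phi(t/\sigma)^N .
\]
This proves \eqref{eq:main-bound}. Since $\Phi(t/\sigma)<1$ is fixed, the right-hand side tends to $a_0$, which gives the $\limsup$ claim.

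\textbf{Necessary margin.} For the converse, suppose the bound permits $A_N\ge1-\eps$. Then $(1-a_0)\Phi(t/\sigma)^N\ge 1-\eps-a_0>0$. Taking $N$-th roots and applying the increasing map $\Phi^{-1}$ gives the first line of \eqref{eq:margin-needed}.

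For the asymptotic line, set $c=\frac{1-\eps-a_0}{1-a_0}\in(0,1)$, so that $c^{1/N}=1-\frac{\log(1/c)}{N}+O(N^{-2})$. We then need the Gaussian quantile near 1: $\Phi^{-1}(1-u)=\sqrt{2\log(1/u)}-\frac{\log\log(1/u)+\log 4\pi}{2\sqrt{2\log(1/u)}}+o(\cdot)$ as $u\to0$. This follows by inverting Mills' ratio bounds $\frac{x}{1+x^2}\phi(x)\le1-\Phi(x)\le\phi(x)/x$.

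Substituting $u\sim\log(1/c)/N$ gives $\Phi^{-1}(c^{1/N})=\sqrt{2\log N}+o(1)$, since $\sqrt{2\log N-a}-\sqrt{2\log N}\to0$ for fixed $a$ and the $\log\log N/\sqrt{\log N}$ correction also vanishes. Hence the requirement is $\gamma\ge\sigma\sqrt{2\log N}+O(1)$, and in particular $\gamma=\Omega(\sqrt{\log N})$.

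The main obstacle is this quantile asymptotic, which needs the Mills-ratio inversion handled carefully. The remaining steps are elementary.
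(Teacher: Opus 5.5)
Your proof is correct and follows the paper's argument essentially step for step. The steps are the same: the faithfulness reduction to $a_0+(1-a_0)\Prb(\alpha_{\star,N}\ge\rho)$; the single-largest-term bound giving $\{\alpha_{\star,N}\ge\rho\}\subseteq\{M_N\le \gamma+\tau\log\frac{1-\rho}{\rho}\}$ (you state it directly, the paper via the contrapositive); the product formula from independence; and the rearrangement plus Gaussian upper-quantile asymptotic for the margin. Your Mills-ratio treatment of that quantile is more explicit than the paper's, which simply invokes the standard asymptotic, and it yields the slightly sharper $\Phi^{-1}(c^{1/N})=\sqrt{2\log N}+o(1)$.
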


\paragraph{Interpretation.} Equation~\eqref{eq:main-bound} is a formal poisoning curve. It says that the model's usable memory decays as the chance of at least one high-scoring distractor increases. The theorem also explains why ``just attend over more tokens'' can fail: unless architecture or training increases the relevant margin with $N$, the attention mass on the relevant evidence eventually becomes too small. The dependence is on $N_{\rm eff}$, not necessarily raw token length. A perfect gate that reduces the candidate set from $N$ to $K$ changes the bound by replacing $N$ with $K$.

\begin{corollary}[Random relevant logit]
Suppose $S_\star$ is independent of $Z_1,\ldots,Z_N$ and
$\Prb(S_\star>\gamma)\leq\delta$. Then
\begin{equation}
 A_N\leq \delta + a_0+(1-a_0)
 \left[
 \Phi\left(
 \frac{\gamma+\tau\log\frac{1-\rho}{\rho}}{\sigma}
 \right)
 \right]^N.
\end{equation}
Thus occasional high relevant logits help only through the tail
probability $\delta$; robust long-context use requires systematic
margin growth or a smaller effective distractor set.
\end{corollary}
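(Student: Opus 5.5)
The statement to prove is the Corollary, which extends Theorem~\ref{thm:poisoning} to a random relevant logit. The plan is to reduce it to the theorem by splitting on the event $E=\{S_\star\le\gamma\}$. First I write
\begin{equation*}
A_N=\Prb(\widehat{y}=y^\star)\le \Prb(E^c)+\Prb(\{\widehat{y}=y^\star\}\cap E)\le \delta+\Prb(\{\widehat{y}=y^\star\}\cap E).
\end{equation*}
The second term is where the theorem's argument gets rerun, now on $E$ rather than almost surely.

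Rather than invoking the theorem as a black box, I would redo its two-step argument on $E$. The first step is to use $\rho$-faithfulness to split on whether $\alpha_{\star,N}<\rho$:
\begin{equation*}
\Prb(\{\widehat{y}=y^\star\}\cap E)\le a_0\,\Prb(\alpha_{\star,N}<\rho)+\Prb(\{\alpha_{\star,N}\ge\rho\}\cap E).
\end{equation*}
Writing the last term as $p$, and using $\Prb(\alpha_{\star,N}<\rho)\le 1-p$, this is at most $a_0(1-p)+p=a_0+(1-a_0)p$.

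The second step bounds $p$ by a deterministic inclusion. The condition $\alpha_{\star,N}\ge\rho$ means
\begin{equation*}
\sum_i e^{Z_i/\tau}\le \tfrac{1-\rho}{\rho}\,e^{S_\star/\tau}.
\end{equation*}
Since each summand is at most the sum, every $Z_i\le S_\star+\tau\log\frac{1-\rho}{\rho}$. On $E$ this gives every $Z_i\le c:=\gamma+\tau\log\frac{1-\rho}{\rho}$. Therefore
\begin{equation*}
p\le \Prb(\max_i Z_i\le c)=\Phi(c/\sigma)^N,
\end{equation*}
using that the $Z_i$ are iid $\mathcal N(0,\sigma^2)$. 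Independence of $S_\star$ from the $Z_i$ is actually not needed once the event has been enlarged to depend on $Z$ alone, but it is harmless. Combining the two bounds yields the claimed inequality.

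The main obstacle is the faithfulness step. $\rho$-faithfulness is stated only as a conditional probability, $\Prb(\widehat{y}=y^\star\mid\alpha<\rho)\le a_0$, so I have to ensure I never need it conditioned on $E$. The decomposition above avoids this by bounding $\Prb(\{\widehat{y}=y^\star\}\cap E\cap\{\alpha<\rho\})$ simply by $\Prb(\widehat{y}=y^\star,\alpha<\rho)\le a_0\Prb(\alpha<\rho)$. The edge case $\Prb(\alpha<\rho)=0$ is trivial, since the term then vanishes. A cruder route, $A_N\le\delta+\text{(theorem bound)}$ via conditioning on $E$ and applying the theorem to the conditional law, would fail for exactly this reason: faithfulness need not survive conditioning. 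So the direct rerun is the version I would write.
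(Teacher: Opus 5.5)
Your proof is correct. It shares the paper's split on $E=\{S_\star\le\gamma\}$, but it handles the $E$ part differently. The paper writes $A_N\le\Prb(E^c)+\Prb(E)\bigl(a_0+(1-a_0)\Phi(c/\sigma)^N\bigr)$, with $c=\gamma+\tau\log\frac{1-\rho}{\rho}$, and justifies the second term by saying the theorem's proof ``applies conditionally'' on $E$. That is precisely the route you reject. Taken literally, that step needs two things. First, it needs independence of $S_\star$ from the $Z_i$, so that the $Z_i$ stay i.i.d.\ $\mathcal{N}(0,\sigma^2)$ under $\Prb(\cdot\mid E)$; this is presumably why the corollary assumes independence. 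Second, it needs $\rho$-faithfulness under the conditional law, $\Prb(\widehat{y}=y^\star\mid\alpha_{\star,N}<\rho,\,E)\le a_0$, which the stated hypothesis does not supply. A decoder whose low-attention successes concentrate on $E$ can make the intermediate bound $\Prb(\widehat{y}=y^\star\mid E)\le a_0+(1-a_0)\Phi(c/\sigma)^N$ fail, even though the corollary's final inequality remains true, as your argument shows. You work with joint events throughout. Unconditional faithfulness controls $\{\widehat{y}=y^\star\}\cap E\cap\{\alpha_{\star,N}<\rho\}$, and the deterministic inclusion $\{\alpha_{\star,N}\ge\rho\}\cap E\subseteq\{\max_i Z_i\le c\}$ controls the rest. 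So your argument is rigorous under the stated hypotheses and shows the independence assumption is unnecessary. The paper's version is shorter and reuses the theorem as a black box. Splitting on $\{\alpha_{\star,N}<\rho\}$ before $E$ would even give the slightly sharper bound $A_N\le a_0+(1-a_0)\bigl(\delta+\Phi(c/\sigma)^N\bigr)$.
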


\section{Mechanisms predicted by the theorem}

\paragraph{Softmax dilution.} Even when no single distractor wins, the denominator $\sum_i\exp(Z_i/\tau)$ can grow. The theorem uses a maximum event because it yields a clean lower bound on failure; denominator growth gives a second, cumulative failure route. Smaller temperature $\tau$ sharpens attention, but it also increases sensitivity to logit noise and can over-commit to an aliased distractor.

\paragraph{Position aliasing.} Long-context extrapolation requires distinguishing positions never or rarely seen during training. Methods such as ALiBi and RoPE encode useful inductive biases \citep{press2022train,su2024roformer}, while interpolation and scaling methods extend the range of positional features \citep{chen2023positionalinterpolation,peng2023yarn}. The definition above highlights a failure case: if two far-apart positions induce similar query-key contributions, a distractor can become indistinguishable from evidence for a given head. ``Found in the middle'' style interventions can be interpreted as changing positional logits to reduce the number of near-ties around relevant evidence \citep{zhang2024found}.

\paragraph{Retrieval versus use.} The result is compatible with work
showing that retrieval can be imperfect
\citep{liu2024lost,hsieh2024ruler}, but it also covers the stronger case
in which retrieval succeeds while problem solving still degrades
\citep{du2025context}. In the theorem, the evidence item is present,
but its score advantage is bounded; failure arises as the growing
candidate set increases the maximum competing score. In a multi-step
reasoning problem, similar interference can occur at multiple evidence
selection stages.

\section{Architectural alternatives}

The bound suggests a design principle: long-context systems should not merely increase the raw token window; they should control $N_{\rm eff}$, amplify the evidence margin, and verify the evidence before answer generation. Figure~\ref{fig:architecture} shows a candidate architecture.

\begin{figure*}[t]
\centering
\resizebox{.98\linewidth}{!}{%
\begin{tikzpicture}[node distance=1.00cm and .90cm, >=Latex,
  every node/.style={font=\small},
  block/.style={draw,rounded corners=2pt,align=center,minimum height=.78cm,minimum width=1.75cm},
  wide/.style={draw,rounded corners=2pt,align=center,minimum height=.78cm,minimum width=2.25cm}]
  \node[wide] (input) {long context\\$C_N$};
  \node[block,right=of input] (seg) {segmenter\\$B_1,\ldots,B_m$};
  \node[wide,right=of seg] (gate) {evidence gate\\$G_K(q,C_N)$};
  \node[wide,right=of gate] (cells) {$K$ evidence\\cells};
  \node[wide,right=of cells] (ver) {verifier and\\compressor};
  \node[wide,right=of ver] (dec) {short-context\\decoder};
  \node[block,below=1.05cm of gate] (pos) {alias-resistant\\position code};
  \node[block,below=1.05cm of cells] (loss) {contrastive\\anti-poison loss};
  \node[block,below=1.05cm of ver] (audit) {citation and\\consistency audit};
  \draw[->] (input) -- (seg);
  \draw[->] (seg) -- (gate);
  \draw[->] (gate) -- (cells);
  \draw[->] (cells) -- (ver);
  \draw[->] (ver) -- (dec);
  \draw[->] (dec) -- ++(1.0,0) node[right] {$\widehat{y}$};
  \draw[->,dashed] (pos) -- (gate);
  \draw[->,dashed] (loss) -- (cells);
  \draw[->,dashed] (audit) -- (ver);
  \node[font=\scriptsize,above=.30cm of gate] (ng) {$O(N)$ candidates};
  \node[font=\scriptsize,above=.30cm of cells] (kc) {$K\ll N$ retained};
  \draw[decorate,decoration={brace,amplitude=4pt},yshift=10pt]
    ($(gate.north west)+(-.12,.24)$) -- ($(cells.north east)+(.12,.24)$)
    node[midway,above=10pt,font=\scriptsize] {control $N_{\rm eff}$ and increase margin};
\end{tikzpicture}%
}
\caption{Poison-resilient long-context architecture. A gate converts a raw $O(N)$ context into $K$ evidence cells, alias-resistant positional coding reduces near-ties, contrastive losses push distractors below evidence, and a verifier forces the decoder to reason over recited, auditable evidence rather than the full noisy context.}
\label{fig:architecture}
\end{figure*}

\subsection{Evidence bottleneck and retrieve-then-reason}

A model-agnostic mitigation is to split long-context processing into two calls or two internal phases: retrieve and recite the evidence, then solve from the shorter evidence prompt. This resembles retrieval-augmented generation \citep{lewis2020rag,guu2020realm,izacard2021leveraging}, but the retrieval source may be the user's own context rather than an external corpus. The theorem predicts that, conditional on retaining the decisive
evidence, the extreme-value term changes from $\Phi(t)^N$ to
$\Phi(t)^K$. Evidence omissions introduce a separate penalty governed
by the gate's miss probability $\eta$. The intervention is therefore
useful only when reducing the candidate set does not substantially
reduce evidence recall.

\begin{proposition}[Gate substitution]
\label{prop:gate}
Assume the theorem's conditions after a gate $G_K$ returns a set of at most $K$ distractors and includes the decisive evidence with probability at least $1-\eta$. Then
\begin{equation}
 A_N\leq \eta + a_0+(1-a_0)\left[\Phi\left(\frac{\gamma+\tau\log\frac{1-\rho}{\rho}}{\sigma}\right)\right]^K.
\end{equation}
The gate improves the bound when the recall penalty $\eta$ is smaller than the reduction in extreme-value risk from replacing $N$ by $K$.
\end{proposition}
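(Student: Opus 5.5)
The argument is a two-case decomposition on the event that the gate keeps the decisive evidence. Let $E$ denote the event that $G_K$ returns a set containing the decisive item, so $\Prb(E^c)\leq\eta$. We write
\begin{equation*}
 A_N=\Prb(\widehat{y}=y^\star)\leq \Prb(E^c)+\Prb(\widehat{y}=y^\star,\,E)\leq \eta+\Prb(\widehat{y}=y^\star\mid E),
\end{equation*}
using only $\Prb(\widehat{y}=y^\star, E^c)\leq\Prb(E^c)$ and $\Prb(\cdot\,,E)\leq\Prb(\cdot\mid E)$. The first term gives the recall penalty directly. All remaining work goes into bounding the conditional accuracy on $E$ by the right-hand side of Theorem~\ref{thm:poisoning} with $N$ replaced by $K$.

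On $E$, the decoder sees one decisive item with logit $S_\star\leq\gamma$ and some number $K'\leq K$ of surviving distractors. The assumption ``the theorem's conditions hold after the gate'' is read as follows: conditional on $E$ and on $K'$, the surviving distractor logits are i.i.d.\ $\mathcal{N}(0,\sigma^2)$, the post-gate attention mass $\alpha_\star$ is formed from these $K'$ logits, and the decoder is $\rho$-faithful with respect to this $\alpha_\star$. Applying Theorem~\ref{thm:poisoning} conditionally then gives
\begin{equation*}
 \Prb(\widehat{y}=y^\star\mid E,K')\leq a_0+(1-a_0)\,\Phi(t)^{K'},\qquad t=\frac{\gamma+\tau\log\frac{1-\rho}{\rho}}{\sigma}.
\end{equation*}
Since $\Phi(t)\in(0,1)$, the map $k\mapsto\Phi(t)^k$ is nonincreasing. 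Hence $\Phi(t)^{K'}\leq\Phi(t)^K$ whenever $K'\geq K$.

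The main obstacle is the direction of this monotonicity. With fewer distractors ($K'<K$) the bound is weaker, not stronger, so ``at most $K$ distractors'' does not by itself give the stated $\Phi(t)^K$. I would handle this by formalizing the gate as returning exactly $K$ distractors, padding with null candidates if needed. An alternative is to reinterpret $K$ as a lower bound on the effective post-gate set size $N_{\rm eff}$ from Definition~\ref{def:aliasing}. Either choice makes the exponent exactly $K$, and averaging over $K'$ then preserves the bound. I would state this convention explicitly. I would also note that conditioning on $E$ must not bias the distractor logits upward: a gate that selects by score breaks the i.i.d.\ Gaussian assumption. This is why the proposition assumes the theorem's conditions hold after the gate rather than deriving them.

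Combining the two pieces gives $A_N\leq\eta+a_0+(1-a_0)\Phi(t)^K$. The final sentence of the proposition follows by comparison with Theorem~\ref{thm:poisoning}'s ungated bound $a_0+(1-a_0)\Phi(t)^N$. The gated bound is smaller exactly when $\eta<(1-a_0)\bigl(\Phi(t)^K-\Phi(t)^N\bigr)$, read with $K<N$ as the relevant regime. In the regime where this inequality fails, the comparison term is negative, so the gate cannot help and the claim is vacuous.
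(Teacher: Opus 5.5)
Your argument is correct and is essentially the paper's. The paper gives no separate proof of Proposition~\ref{prop:gate}; its intended argument is the decomposition it uses for the corollary: bound accuracy by one on the miss event (probability at most $\eta$), then apply Theorem~\ref{thm:poisoning} conditionally on retention with $K$ in place of $N$. Your caveat about ``at most $K$'' is also right, and the paper glosses over it. With $K'<K$ surviving distractors the exponent is $K'$ and $\Phi(t)^{K'}\geq\Phi(t)^K$; for example, $K'=0$ and $\eta=0$ give $\alpha_{\star}=1$ and allow accuracy $1$, which exceeds the stated bound whenever $a_0<1$ and $K\geq 1$. So the exactly-$K$ (or at-least-$K$) convention you adopt, and your explicit assumption that retention does not bias the surviving logits, are genuinely needed for the inequality to hold.
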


\subsection{Alias-resistant positional and memory design}

Position encodings should be evaluated by their worst-case confusability, not only by perplexity. One can regularize heads so that for sampled query states $q$ and relevant positions $r$,
\begin{equation}
 \mathcal{L}_{\rm alias}=\E\left[\log\sum_{i\in D}\exp\left(\frac{s_i-s_r+m(i,r)}{\tau_a}\right)\right],
\end{equation}
where $m(i,r)$ is a distance-aware margin. This loss penalizes the log-sum-exp of distractor gaps and directly targets the denominator and maximum terms in the theorem. For streaming applications, preserving a small set of attention sinks can stabilize decoding \citep{xiao2024streamingllm}; however, sinks should be separated from semantic evidence cells so that sink tokens do not become spurious evidence.

\subsection{Verifier-mediated memory}

A verifier can transform long-context use from implicit attention to explicit evidence checking. Given candidate cells $E_1,\ldots,E_K$, the verifier produces a structured state
\begin{equation}
 z=(\text{claims},\text{support spans},\text{conflicts},\text{abstain flag}).
\end{equation}
The decoder receives $z$ and a compact evidence prompt. This creates two audit points: whether the key evidence was selected, and whether the answer follows from selected evidence. It also enables selective abstention when the evidence margin is below a calibrated threshold.

\subsection{Training objective}

For synthetic and real long-context tasks, construct triples $(q,x_\star,D,y^\star)$ with hard negatives. Let $s_\star$ be the score for the decisive evidence and $s_i$ for distractors. A direct anti-poison objective is
\begin{equation}
\begin{aligned}
 \mathcal{L}={}&\mathcal{L}_{\rm ans}-\lambda\log\frac{\exp(s_\star/\tau)}{\exp(s_\star/\tau)+\sum_{i\in D}\exp(s_i/\tau)}\\
 &+\mu\sum_{i\in D}\max\{0,s_i-s_\star+m_i\},
\end{aligned}
\end{equation}
where $m_i$ increases for semantically close or positionally confusable distractors. The second term controls attention mass; the third controls the maximum distractor gap. Prompt-compression methods such as LongLLMLingua are complementary when they preserve decisive evidence and remove low-utility tokens \citep{jiang2023longllmlingua}. Table~\ref{tab:solutions} summarizes these mitigation mechanisms and
their corresponding effects on the theoretical bound.

\begin{table}[t]
\caption{Mitigation mechanisms and their mathematical target.}
\label{tab:solutions}
\centering
\small
\begin{tabular}{p{0.22\linewidth}p{0.33\linewidth}p{0.32\linewidth}}
\toprule
Mechanism & Bound-level effect & Implementation sketch \\
\midrule
Evidence bottleneck & Replace $N$ by $K\ll N$ while paying recall error $\eta$ & Retrieve, recite, then solve; cite spans \\
Alias-resistant positions & Reduce probability that $s_i\approx s_\star$ & RoPE/ALiBi scaling plus margin regularization \\
Contrastive anti-poison loss & Increase $\gamma$ and suppress $\max_i Z_i$ & Hard negatives, shuffled positions, adversarial distractors \\
Verifier-mediated memory & Reduce $a_0$ failures and detect low-margin cases & Evidence table, consistency checks, abstention \\
Hierarchical memory & Bound per-level $N_{\rm eff}$ & Chunk summaries with back-pointers to raw spans \\
\bottomrule
\end{tabular}
\end{table}

\section{Experimental setup and metrics}
\label{sec:evaluation-protocol}

The theory recommends measuring the whole poisoning curve, not just the
maximum accepted length. A benchmark instance should preserve the
short-context answer while independently varying raw length $N$,
effective distractor count $N_{\rm eff}$, relevant position, number of
evidence items, semantic similarity of distractors, and positional
regime. Existing suites such as LongBench, L-Eval, ZeroSCROLLS, and
RULER provide useful starting points
\citep{bai2024longbench,an2024leval,shaham2023zeroscrolls,hsieh2024ruler}.

We report four quantities where available: (i) short-context accuracy
$A(N_0)$, (ii) the poisoning curve $P(N)=A(N_0)-A(N)$, (iii) evidence
recall, and (iv) conditional answer accuracy after successful retrieval.
The last quantity separates retrieval failure from evidence-use failure
\citep{du2025context}. Paired conditions reuse the same questions, and
failure outputs are inspected when possible to distinguish supplied-decoy
confusion from hallucination or formatting error.

\begin{table*}[t]
\centering
\small
\setlength{\tabcolsep}{4pt}
\caption{Overview of the controlled studies. The first two studies are
reported in the main paper; the gate intervention is reported in the
appendix. The designs vary complementary axes rather than repeating the
same manipulation.}
\label{tab:study-overview}
\begin{tabular}{@{}p{0.19\textwidth}p{0.17\textwidth}p{0.20\textwidth}p{0.18\textwidth}p{0.20\textwidth}@{}}
\toprule
Study & Model(s) & Primary intervention & Context regime & Diagnostic separation \\
\midrule
Hard-negative decay
& GPT-4o
& Increase distractor count while retaining embedded decoys
& $N\in\{100,300,500,1000\}$ sentences
& Manual audit verifies that errors select supplied decoy values \\
Distractor confusability
& Claude Sonnet~4, GPT-4.1, Gemma~3 (12B)
& Change topical, structural, and entity-name overlap at fixed $N$
& $300$ records, approximately $20$--$21$K tokens
& Answer accuracy and cited-record recall distinguish extraction from retrieval \\
Retrieve-then-reason gate
& GPT-4.1
& Replace all $N$ chunks with BM25 top-$K$ candidates
& $4$K--$512$K tokens
& Recall@$K$ and accuracy conditional on retrieving all support \\
\bottomrule
\end{tabular}
\end{table*}

Table~\ref{tab:study-overview} divides the hypothesis space across
candidate count, distractor composition, and mitigation. We summarize
controlled study properties rather than model architecture because the
hosted systems do not expose comparable internal specifications. No
single study validates the full theory; together they test distinct,
non-redundant implications.

\section{Empirical study: retrieval decay with embedded hard negatives}
\label{sec:decay}

\nameref{sec:evaluation-protocol} introduces a minimal synthetic test
of Theorem~\ref{thm:poisoning}---key-value retrieval with a variable number of hard negatives. We carry out that test here, checking whether the theorem's central prediction, that accuracy decays toward a chance floor as the number of distractors grows, holds for a modern production model.

\paragraph{Setup.} We embed a single target fact in a passage of $N$ surrounding sentences and ask GPT-4o to retrieve it. The filler is drawn from a public-domain novel, so it reads as ordinary prose rather than an artificial list. Critically, the target fact is not the only sentence of its kind: several \emph{decoy} sentences, each stating a different value in the same natural phrasing as the true fact, are woven into the passage. This makes the task a genuine test of retrieval under hard negatives rather than trivial pattern matching---the model must identify \emph{which} value is being asked about from narrative context, not merely notice that a value is present. The target's position is randomized on each trial, and $N$ is varied across $\{100,300,500,1000\}$ with $10$ independent trials per condition.

\paragraph{Prediction.} Theorem~\ref{thm:poisoning} predicts that retrieval accuracy decays toward a chance floor $a_0$ as $N$ grows, following
\begin{equation}
 A(N)\approx a_0+(1-a_0)\,q^{N},\qquad q=\Phi(b/\sigma)\in(0,1),
 \label{eq:decay-fit}
\end{equation}
unless the target's evidence margin grows with $N$. We fit Eq.~\eqref{eq:decay-fit} to the observed accuracy by nonlinear least squares, and we inspected every incorrect response individually to confirm that failures reflect genuine confusion among competing candidates (the model's answer was always one of the decoy values actually present in that trial's passage) rather than hallucination or task misunderstanding.

\begin{table}[t]
    \centering
    \caption{Retrieval accuracy vs.\ number of distractors $N$, GPT-4o, 10 trials per condition.}
    \label{tab:empirical-decay}
    \begin{tabular}{@{}lccc@{}}
        \toprule
        $N$ & Trials & Accuracy & Mean prompt length (chars) \\
        \midrule
        100  & 10 & 100\% & 8{,}952  \\
        300  & 10 & 80\%  & 25{,}978 \\
        500  & 10 & 80\%  & 41{,}954 \\
        1000 & 10 & 80\%  & 84{,}668 \\
        \bottomrule
    \end{tabular}
\end{table}

\paragraph{Results.} Table~\ref{tab:empirical-decay} shows accuracy holding at $100\%$ at $N=100$ before dropping to a stable $80\%$ by $N=300$ and remaining there through $N=1000$. Fitting Eq.~\eqref{eq:decay-fit} gives $\hat{a}_0 = 0.777$, $\hat{q} = 0.9962$, with $R^2 = 0.74$.

\paragraph{Interpretation.}
These results provide initial behavioral evidence consistent with the
theorem's qualitative prediction: retrieval accuracy decreases as
exposure to hard-negative distractors grows. Every observed error
selected a decoy value present in the prompt, ruling out hallucination,
refusal, or task misunderstanding as the immediate failure type.
Because the experiment does not observe internal attention scores, it
does not establish attention interference as the unique underlying
mechanism. The fitted exponential form is also exploratory rather than
conclusive ($R^2=0.74$): accuracy drops between $N=100$ and $N=300$
and then plateaus through $N=1000$, suggesting a threshold-like
transition over the tested range. A denser sweep around this transition
and more trials per condition are needed to estimate the curve's shape
reliably.

This study varies candidate exposure but cannot separate raw count from
candidate composition. The next study holds record count and approximate
token length fixed while changing the form of competition.

\section{Empirical study: distractor confusability at fixed context length}
\label{sec:ablation}

We instantiate the evaluation protocol with a controlled study that holds raw context length fixed while varying observable properties of distractor confusability. This design tests a behavioral implication of score aliasing (Definition~\ref{def:aliasing}): retrieval performance may depend not only on the number of tokens or records in the context, but also on how strongly the distractors compete with the gold evidence. Because the experiment does not directly observe attention scores, it should be interpreted as a behavioral test of this implication rather than a direct measurement of the effective distractor count $N_{\rm eff}$.

\paragraph{Setup.} Each instance is a key--value retrieval query over $N=300$ records (approximately $20{,}000$--$21{,}000$ tokens after length matching) with exactly one gold record; the model must return the answer and cite the evidence record ID. We evaluate three models from different model families and deployment settings---Claude Sonnet~4, GPT-4.1, and Gemma~3 (12B)---at temperature zero. Holding $N$ fixed, we compare four distractor constructions: unrelated filler, same-domain/different-attribute records, same-format/different-entity records, and near-alias records whose entities share a surname with the gold entity. These conditions separately manipulate topical overlap, structural overlap, and entity-name overlap; they should not be interpreted as points on a single monotonic similarity scale. Records are padded to a fixed length so the manipulation is not confounded with token count, and evidence position is balanced across beginning, middle, and end. The full protocol, prompts, and per-model tables are provided in the appendix (\nameref{app:ablation}).

\paragraph{Results.} Table~\ref{tab:ablation-main} and Figure~\ref{fig:ablation} summarize the outcome. Accuracy is near-ceiling for unrelated and same-domain distractors for all three models, then decreases in the same-format condition: Claude falls to $0.889$ (an $11.1$ percentage-point paired drop against baseline, $95\%$ CI $[+2.8,+22.2]$---the only statistically significant paired contrast in the study), GPT-4.1 falls to $0.967$, and Gemma falls to approximately $0.85$. The character-trigram Jaccard measure serves specifically as an entity-name lexical-overlap check. It remains below $0.01$ for the first three conditions and rises sharply to $0.317$ only in the near-alias condition (Figure~\ref{fig:ablation}, right). Thus, the decrease in the same-format condition occurs despite low entity-name overlap and cannot be explained by name similarity alone.

\begin{table*}[t]
\caption{Answer accuracy by model and distractor construction at fixed context length ($N=300$ records, $\sim$20K tokens, errors excluded). The rightmost column reports entity-name lexical overlap, measured by character-trigram Jaccard similarity to the gold entity. Gemma values are approximate due to a high endpoint error rate. Full results with evidence recall, valid counts, and paired contrasts are provided in the appendix (\nameref{app:ablation}).}
\label{tab:ablation-main}
\centering
\small
\begin{tabular}{lcccc}
\toprule
Distractor condition & Claude Sonnet 4 & GPT-4.1 & Gemma 3 (12B) & Sim.\ proxy \\
\midrule
Unrelated filler & 1.000 & 1.000 & 1.000 & 0.004 \\
Same domain, different attribute & 1.000 & 1.000 & 1.000 & 0.005 \\
Same format, different entity & \textbf{0.889} & \textbf{0.967} & $\sim$\textbf{0.85} & 0.010 \\
Near-alias entity & 1.000 & 0.983 & $\sim$0.89 & 0.317 \\
\bottomrule
\end{tabular}
\end{table*}

\begin{figure}[t]
\centering
\includegraphics[width=\linewidth]{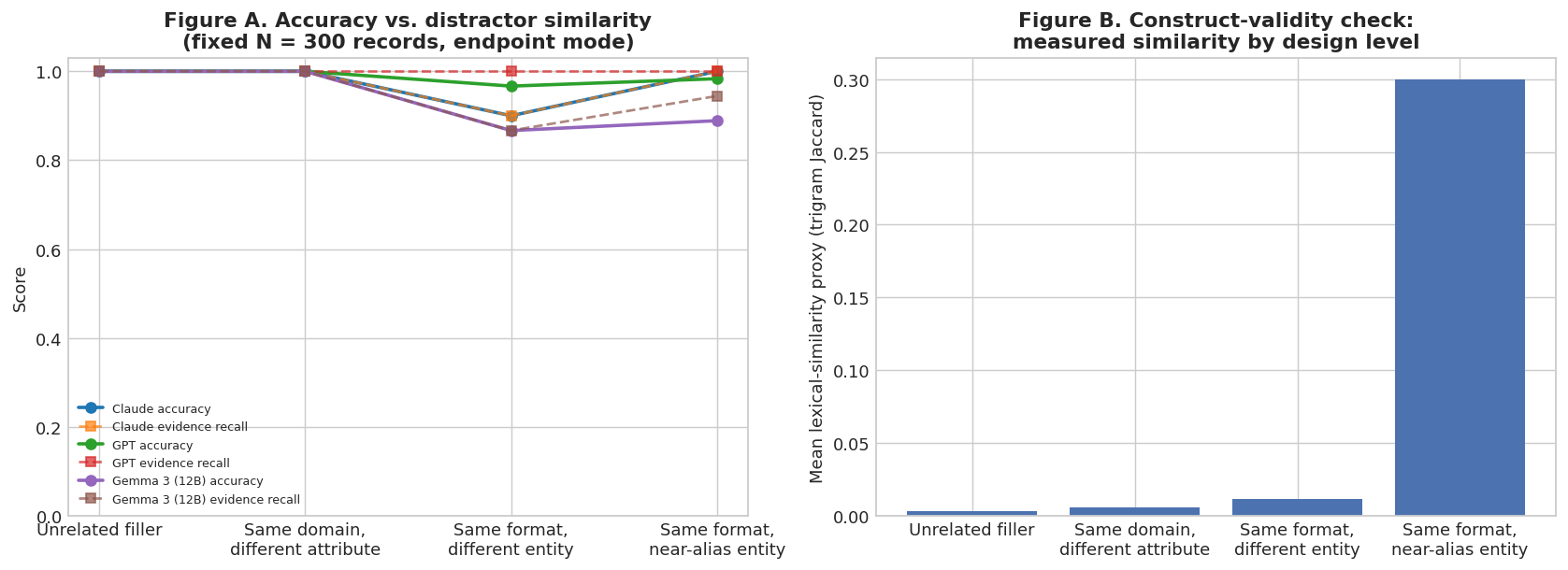}
\caption{Distractor-confusability ablation at fixed context length ($N=300$). \textbf{Left:} answer accuracy and evidence recall across four distractor constructions; all models show their largest decrease in the same-format condition, while the two frontier models recover toward baseline in the near-alias condition. \textbf{Right:} entity-name lexical-overlap check---the character-trigram Jaccard measure is near zero for the first three conditions and rises only in the near-alias condition. The accuracy decrease in the same-format condition therefore cannot be attributed to high entity-name overlap.} 
\label{fig:ablation}
\end{figure}

\paragraph{Two findings.} First, \emph{structural overlap is more damaging in these data than entity-name overlap}. The largest observed degradation for every model occurs in the same-format condition, where all $300$ records share the gold record's template but refer to different entities. This pattern is behaviorally consistent with increased score aliasing, because a shared structure may create a larger set of competitive candidates. However, the experiment does not directly observe attention scores or determine which distractors fall within an $\eps$-aliasing band. Second, we observe a \emph{near-alias paradox}: despite substantially greater entity-name similarity, accuracy in the near-alias condition recovers toward baseline for the two frontier models (Claude $1.000$, GPT-4.1 $0.983$). One possible explanation is that the first name remains a discriminative cue, whereas structurally identical records provide fewer template-level features for distinguishing the gold record. This explanation is a hypothesis rather than a directly tested mechanism. Gemma shows the largest drops overall, but its endpoint rejected $96\%$ of these $\sim$20K-token calls, so its estimates are underpowered and should be interpreted as suggestive.

\paragraph{Interpretation and cross-study synthesis.}
The study supports the narrower conclusion that distractor composition
matters even when record count and approximate token length are held
constant. The same-format condition produces the largest observed
accuracy decrease, whereas the condition with the greatest entity-name
overlap does not. This pattern is consistent with the distinction
between raw context size and effective distractor count, although the
experiment does not directly measure $N_{\rm eff}$ or the underlying
attention-score geometry.

Under the proposed interpretation, structurally identical records may
increase the number of candidates that compete strongly with the gold
record, while the near-alias condition may preserve discriminative
entity-level cues. The retrieval-decay study complements this result by
varying the number of exposed hard negatives rather than their
construction. Together, the two studies indicate that neither raw token
count nor lexical similarity alone explains long-context failure;
candidate exposure and structural confusability both matter. This
synthesis motivates the appendix gate experiment, which tests whether
reducing the competitive candidate set improves evidence use once
retrieval recall is measured separately. The findings also motivate
defenses such as record-format diversification, explicit provenance
metadata, and retrieval mechanisms that penalize suspiciously
repetitive templates.

\section{Limitations and conclusion}

The theorem isolates one long-context failure mechanism rather than
modeling an entire LLM. It assumes independent Gaussian distractor
logits, a single decisive item, and a threshold model of evidence use.
Real systems contain correlated representations, multiple heads,
residual pathways, memorized priors, tools, and instruction hierarchies.
The result should therefore be interpreted as a mechanistic diagnostic,
not as a calibrated predictor for a specific model. The appendix
attention analyses test these assumptions directly and show that,
although the Gaussian i.i.d.\ abstraction is not exact, the predicted
extreme-value scale remains a useful reference.

The empirical evidence is also deliberately controlled. The
retrieval-decay study uses only $10$ trials per condition, while the
fixed-length ablation measures behavioral proxies for confusability
rather than attention-score margins directly. Unequal endpoint
reliability also makes the Gemma estimates suggestive. Both main-paper
studies emphasize retrieval-style tasks, so code repositories,
conversations, and temporally conflicting documents may exhibit
additional failure modes.

Within these limits, the paper establishes a consistent result across
theory and experiment. A fixed evidence margin becomes increasingly
fragile as the effective distractor set grows; retrieval performance
depends on both candidate exposure and distractor structure; and
reducing the competitive set helps only when required evidence is
preserved. Future work should therefore cross context length with
distractor construction, connect internal score margins to end-to-end
errors, and compare fixed with adaptive retrieval budgets. Gate studies
should report final accuracy, all-support recall, and conditional
accuracy together, because any one metric can conceal failure in
another stage.

The central conclusion is that a larger context window increases both
available information and the opportunities for distractors to compete
with decisive evidence. The relevant quantity is the effective number
and strength of confusable tokens or chunks, not raw window length
alone. Robust long-context systems should therefore combine evidence
selection, alias control, recall-preserving retrieval, and explicit
verification. A context window is capacity; usable memory is capacity
plus reliable addressing.

\clearpage
\bibliography{aaai2027}

\clearpage
\appendix
\input{appendix}

\end{document}

%% file: appendix.tex

\section{Proofs}
\label{app:proofs}

\begin{proof}[Proof of Theorem~\ref{thm:poisoning}]
Define
\begin{equation}
 b=\gamma+\tau\log\frac{1-\rho}{\rho},\qquad M_N=\max_{1\leq i\leq N}Z_i.
\end{equation}
If $M_N\geq b$, then there exists a distractor $j$ such that
\begin{equation}
 Z_j-S_\star\geq Z_j-\gamma\geq \tau\log\frac{1-\rho}{\rho}.
\end{equation}
Using only this distractor in the softmax denominator gives
\begin{align}
 \alpha_{\star,N}
 &=\frac{\exp(S_\star/\tau)}{\exp(S_\star/\tau)+\sum_{i=1}^N\exp(Z_i/\tau)}\\
 &\leq \frac{\exp(S_\star/\tau)}{\exp(S_\star/\tau)+\exp(Z_j/\tau)}
 =\frac{1}{1+\exp((Z_j-S_\star)/\tau)}\\
 &\leq \frac{1}{1+(1-\rho)/\rho}=\rho.
\end{align}
Therefore, if $M_N>b$, then $\alpha_{\star,N}<\rho$, and hence
\[
\{\alpha_{\star,N}\geq\rho\}\subseteq\{M_N\leq b\}.
\]
Since the $Z_i$ are i.i.d.\ Gaussian, $M_N$ has a continuous
distribution, so
\begin{equation}
\Prb(M_N\leq b)
=
\prod_{i=1}^{N}\Prb(Z_i\leq b)
=
\left[\Phi\left(\frac{b}{\sigma}\right)\right]^N.
\end{equation}
Thus
\begin{equation}
\Prb(\alpha_{\star,N}\geq\rho)
\leq
\Prb(M_N\leq b)
=
\left[\Phi\left(\frac{b}{\sigma}\right)\right]^N.
\end{equation}
By $\rho$-faithfulness,
\begin{align}
 A_N
 &=\Prb(\widehat{y}=y^\star,\alpha_{\star,N}\geq\rho)+\Prb(\widehat{y}=y^\star,\alpha_{\star,N}<\rho)\\
 &\leq \Prb(\alpha_{\star,N}\geq\rho)+a_0\Prb(\alpha_{\star,N}<\rho)\\
 &= a_0+(1-a_0)\Prb(\alpha_{\star,N}\geq\rho)\\
 &\leq a_0+(1-a_0)\left[\Phi\left(\frac{\gamma+\tau\log\frac{1-\rho}{\rho}}{\sigma}\right)\right]^N.
\end{align}
This proves the finite-sample accuracy bound stated in Theorem~\ref{thm:poisoning}. If $b$ is fixed, then $\Phi(b/\sigma)<1$ unless $b=+\infty$, so the exponential term tends to zero and $\limsup_{N\to\infty} A_N\leq a_0$.

For the bound to permit accuracy at least $1-\eps$, it is necessary that
\begin{equation}
a_0+(1-a_0)\Phi(b/\sigma)^N\geq 1-\eps.
\end{equation}
Rearranging gives
\begin{equation}
\Phi(b/\sigma)\geq
\left(\frac{1-\eps-a_0}{1-a_0}\right)^{1/N}.
\end{equation}
Because $\Phi^{-1}$ is increasing,
\begin{equation}
b
\geq
\sigma\Phi^{-1}\left(
\left(\frac{1-\eps-a_0}{1-a_0}\right)^{1/N}
\right).
\end{equation}
Substituting
\[
b=\gamma+\tau\log\frac{1-\rho}{\rho}
\]
yields the finite-sample margin requirement stated in Theorem~\ref{thm:poisoning}.

To obtain its asymptotic form, define
\[
r=\frac{1-\eps-a_0}{1-a_0}\in(0,1)
\qquad\text{and}\qquad
\kappa=-\log r>0.
\]
Then
\begin{equation}
r^{1/N}
=
\exp\left(-\frac{\kappa}{N}\right)
=
1-\frac{\kappa}{N}+O(N^{-2}).
\end{equation}
Using the standard Gaussian upper-quantile asymptotic,
\begin{equation}
\Phi^{-1}\left(1-\frac{\kappa}{N}+O(N^{-2})\right)
=
\sqrt{2\log N}+O(1).
\end{equation}
Therefore,
\begin{equation}
\gamma
\geq
\sigma\sqrt{2\log N}+O(1),
\end{equation}
which proves the stated asymptotic margin requirement.
\end{proof}

\begin{proof}[Proof of the corollary]
Let $E=\{S_\star\leq\gamma\}$. On $E$, the theorem's proof applies conditionally. On $E^c$, bound accuracy by one. Hence
\begin{align}
A_N&\leq \Prb(E^c)\notag\\
&\quad+\Prb(E)\left(a_0+(1-a_0)\left[\Phi\left(\tfrac{\gamma+\tau\log\tfrac{1-\rho}{\rho}}{\sigma}\right)\right]^N\right)\\
&\leq \delta+a_0+(1-a_0)\left[\Phi\left(\tfrac{\gamma+\tau\log\tfrac{1-\rho}{\rho}}{\sigma}\right)\right]^N.
\end{align}
\end{proof}

\section{Additional implementation details}
\label{app:implementation}

A practical implementation of the retrieve-then-reason architecture
shown in Figure~\ref{fig:architecture} of the main paper can be
built without changing a base LLM: chunk the context, retrieve top-$K$ chunks with a lexical and dense hybrid scorer \citep{robertson2009probabilistic}, ask the model to quote relevant spans, and start a new short-context call that contains only the quoted spans and the question. A model-internal implementation can train a gate and verifier jointly with the decoder. In both cases, the evaluation should report gate recall separately from final answer accuracy.

\section{Distractor confusability ablation: full protocol and results}
\label{app:ablation}
This section provides the complete protocol and supplemental tables for the distractor-confusability study reported in the main paper (\nameref{sec:ablation}).
\subsection{Task and distractor constructions}
Each instance poses a factual question about a fictional character drawn from well-known literary universes (Harry Potter, The Lord of the Rings, The Chronicles of Narnia, A Song of Ice and Fire); the entities are used only to construct newly written, factual-style records, and no copyrighted text is reproduced. The model receives one gold record and $299$ distractor records, for a total of $N_{\mathrm{records}}=300$ records, and must return a JSON object containing the extracted answer and the cited evidence record ID. Illustrated for the question ``What is Stannis Baratheon's castle?'' (gold answer: Dragonstone; gold record ID: gold-0000), the four distractor conditions are:

\begin{itemize}
\item \textbf{Unrelated filler (baseline).} Distractors are catalogue notes about unrelated objects (e.g.\ ``a dragon egg''), with no lexical or semantic overlap with the gold entity or attribute. The gold record is the only topically relevant record.
\item \textbf{Same domain, different attribute.} Distractors are facts from the same fictional universe but about different attributes and entity types (lineages, battles, geography), creating topical but non-responsive overlap.
\item \textbf{Same format, different entity.} Every distractor uses the gold record's exact template (``Authoritative record: X's castle is Y'') for a clearly different character, so the model must discriminate purely on entity name among $300$ structurally identical records.
\item \textbf{Near-alias entity.} Distractors use the same template and name entities that share the gold entity's surname (e.g.\ Robert, Renly, and Joffrey Baratheon versus Stannis Baratheon), each giving a factually correct but wrong-for-the-question answer, so a model that matches on surname alone extracts the wrong answer.
\end{itemize}

\subsection{Controls}

The context contains $N_{\mathrm{records}}=300$ records in every condition. Each distractor record is padded to a fixed word count with neutral filler tokens so the similarity manipulation is not confounded with text length. Evidence position is balanced across beginning, middle, and end, and each base instance appears exactly once per similarity condition to enable within-instance paired comparisons. A model-independent lexical-similarity proxy (character-trigram Jaccard between each distractor entity name and the gold entity name) is computed per condition as a manipulation check. API errors (timeouts, rate limits, context-length rejections) are excluded from accuracy calculations; only successful responses are scored.

\subsection{Models and scale}

Claude Sonnet~4 and GPT-4.1 were accessed through an OAuth2 gateway; Gemma~3 (12B) through a pay-per-token foundation-model API. Claude and GPT used $60$ base instances $\times\,4$ levels $=240$ conditions each; Gemma used an expanded design of $460$ base instances ($1{,}840$ conditions) to accumulate valid calls despite a high endpoint error rate. Each call processes roughly $20{,}000$ tokens of context. Answer accuracy uses loose string normalization (lowercased, articles stripped, whitespace collapsed); evidence recall requires the correct gold record ID; paired accuracy drops are estimated by paired bootstrap ($5{,}000$ iterations, seed $123$).

\subsection{Construct validity and length matching}

The entity-name lexical-overlap proxy remains small in the unrelated, same-domain, and same-format conditions ($0.004$--$0.010$) and rises to $0.317$ in the near-alias condition. This verifies that the near-alias construction specifically increases entity-name overlap, whereas the same-format construction does not. Median approximate token counts are similar across conditions, so the observed accuracy differences are not explained by large changes in context length (Table~\ref{tab:construct}).

\begin{table*}[h]
\caption{Construct-validity and length-matching check: mean lexical-similarity proxy (character-trigram Jaccard between distractor and gold entity names) and median approximate token count by distractor condition, at $N_{\mathrm{records}}=300$.}
\label{tab:construct}
\centering
\small
\begin{tabular}{lcc}
\toprule
Distractor condition & Mean similarity proxy & Median tokens \\
\midrule
Unrelated filler & 0.004 & 20{,}508 \\
Same domain, different attribute & 0.005 & 20{,}149 \\
Same format, different entity & 0.010 & 21{,}055 \\
Near-alias entity & 0.317 & 21{,}053 \\
\bottomrule
\end{tabular}
\end{table*}

\subsection{API reliability}

At $\sim$20K tokens, endpoint reliability varied sharply across providers (Table~\ref{tab:reliability}). GPT-4.1 completed all calls; Claude's error rate scaled with token count (approximately $11.7\%$ at $\sim$9K, $35.4\%$ at $\sim$20K, $59.6\%$ at $\sim$35K in companion runs); Gemma's endpoint rejected most $\sim$20K-token inputs at a context-length limit. All accuracy analyses use valid (non-error) calls only.

\begin{table*}[h]
\caption{Endpoint reliability at $\sim$20K tokens. Accuracy analyses are computed on valid calls only.}
\label{tab:reliability}
\centering
\small
\begin{tabular}{lcccl}
\toprule
Model & Total calls & Valid calls & Error rate & Primary cause \\
\midrule
Claude Sonnet 4 & 240 & 155 & 35.4\% & Rate limits / timeouts \\
GPT-4.1 & 240 & 240 & 0.0\% & --- \\
Gemma 3 (12B) & 1{,}840 & 72 & 96.1\% & Context-length limit \\
\bottomrule
\end{tabular}
\end{table*}

\subsection{Full accuracy, evidence recall, and paired contrasts}

Table~\ref{tab:full-acc} reports answer accuracy and evidence recall
for every model and distractor condition with valid counts;
Table~\ref{tab:paired} reports paired accuracy drops relative to the
unrelated-filler baseline. Claude's $+0.111$ same-format drop is the
only paired contrast whose reported $95\%$ confidence interval
excludes zero. GPT-4.1 has a smaller point-estimate drop in the same
direction ($+0.033$), but its confidence interval includes zero.
Notably, GPT-4.1's evidence recall remains at $1.000$ across all
conditions: when it errs, it cites the correct record but extracts an
incorrect answer, indicating an evidence-use rather than a retrieval
failure.

\begin{table*}[h]
\caption{Answer accuracy and evidence recall by model and distractor condition ($N_{\mathrm{records}}=300$, $\sim$20K tokens, errors excluded). Gemma counts and values are approximate due to the $96.1\%$ error rate and should be interpreted with caution.}
\label{tab:full-acc}
\centering
\small
\begin{tabular}{llccc}
\toprule
Model & Distractor condition & Accuracy & Evidence recall & Valid $n$ \\
\midrule
Claude Sonnet 4 & Unrelated filler & 1.000 & 1.000 & 51 \\
Claude Sonnet 4 & Same domain, different attribute & 1.000 & 1.000 & 36 \\
Claude Sonnet 4 & Same format, different entity & 0.889 & 0.889 & 36 \\
Claude Sonnet 4 & Near-alias entity & 1.000 & 1.000 & 32 \\
\midrule
GPT-4.1 & Unrelated filler & 1.000 & 1.000 & 60 \\
GPT-4.1 & Same domain, different attribute & 1.000 & 1.000 & 60 \\
GPT-4.1 & Same format, different entity & 0.967 & 1.000 & 60 \\
GPT-4.1 & Near-alias entity & 0.983 & 1.000 & 60 \\
\midrule
Gemma 3 (12B) & Unrelated filler & 1.000 & 1.000 & $\sim$22 \\
Gemma 3 (12B) & Same domain, different attribute & 1.000 & 1.000 & $\sim$20 \\
Gemma 3 (12B) & Same format, different entity & $\sim$0.85 & $\sim$0.85 & $\sim$16 \\
Gemma 3 (12B) & Near-alias entity & $\sim$0.89 & $\sim$0.89 & $\sim$14 \\
\bottomrule
\end{tabular}
\end{table*}

\begin{table*}[h]
\caption{Paired accuracy drop relative to the unrelated-filler baseline (positive $=$ harder; paired bootstrap, $5{,}000$ iterations, errors excluded). Gemma is omitted: sparse matched pairs ($n=4$--$5$) preclude reliable paired testing.}
\label{tab:paired}
\centering
\small
\begin{tabular}{llccc}
\toprule
Model & Comparison condition & Accuracy drop $\Delta$ & 95\% CI & Significant? \\
\midrule
Claude Sonnet 4 & Same domain, different attribute & $0.000$ & $[0.000, 0.000]$ & No \\
Claude Sonnet 4 & Same format, different entity & $+0.111$ & $[+0.028, +0.222]$ & Yes \\
Claude Sonnet 4 & Near-alias entity & $0.000$ & $[0.000, 0.000]$ & No \\
\midrule
GPT-4.1 & Same domain, different attribute & $0.000$ & $[0.000, 0.000]$ & No \\
GPT-4.1 & Same format, different entity
& $+0.033$ & $[0.000,+0.083]$ & No \\
GPT-4.1 & Near-alias entity & $+0.017$ & $[0.000, +0.050]$ & No \\
\bottomrule
\end{tabular}
\end{table*}

Across the four condition-level means, the descriptive Spearman
correlations between the lexical-overlap proxy and mean accuracy are
$\rho=-0.258$ for Claude, $\rho=-0.738$ for GPT-4.1, and
$\rho=-0.738$ for Gemma. Because these coefficients are based on only
four condition-level observations and include tied accuracy values,
they are descriptive summaries rather than inferential evidence.

\subsection{Limitations and adversarial-RAG implications}

Several caveats qualify these results. Claude's $35.4\%$ error rate raises a survivorship concern---valid calls may under-represent the hardest instances---so the effect should be replicated on a higher-throughput endpoint. Gemma is underpowered ($72/1{,}840$ valid); its aggregate pattern is suggestive but not conclusive, and a dedicated run at shorter context ($N\approx 64$--$128$) would provide more reliable estimates. The near-alias manipulation tops out at Jaccard $0.317$, which frontier models handle well; exact-paraphrase or same-entity/different-date distractors could reveal stronger effects. Because the entities are drawn from popular fiction, pretraining memorization may make the task easier than novel-entity retrieval in production RAG, so the surprising same-format-over-near-alias ordering warrants replication with non-fictional entities. Finally, the present analysis evaluates the four distractor conditions at $N=300$ and therefore does not estimate an interaction between context length and distractor construction.
The observed same-format drop suggests a potential vulnerability:
injecting structurally similar records containing plausible alternative
answers may degrade accuracy even when the gold evidence remains
present. In this experiment, the same-format condition produced a
larger point-estimate drop than the near-alias condition, although that
difference was not tested directly. The results motivate defenses that
diversify record formats, attach provenance metadata, and de-rank
suspiciously repetitive templates. The present study does not estimate
how this effect changes with context length.

\section{Attention-distribution diagnostics and dose-response: from assumption to mechanism}
\label{app:attention}

Theorem~\ref{thm:poisoning}'s bound rests on a specific structural assumption: that distractor attention logits are independent, identically distributed Gaussian noise. This section tests that assumption directly on an open-weight model and then asks two follow-up questions that an internal diagnostic cannot answer on its own: does the assumption violation correspond to behavioral failure, and, if raw context length alone does not explain failure, what does? These findings complement, rather than replace, the empirical studies reported in the main paper (\nameref{sec:decay} and \nameref{sec:ablation}).

We use Gemma-3-4B-IT \citep{gemmateam2025gemma3} with \texttt{output\_attentions} enabled, run locally via Hugging Face \texttt{transformers}. Each trial builds a needle-in-haystack prompt: one evidence sentence stating a random secret code, embedded among $N$ distractor sentences, followed by a question asking for the code. For each of the $34$ layers and $8$ attention heads ($272$ layer-head pairs), we sum the final answer-generating token's post-softmax attention weights over the tokens belonging to each distractor sentence, producing a sentence-level attention-mass value. We analyze the logarithm of this quantity after robust median/MAD
standardization. Because sentence aggregation occurs before the
logarithm, this statistic is a sentence-level log-attention score rather
than a recovered raw attention logit. We sweep $N \in \{20, 50, 100\}$ with 10 trials each for the attention analysis (limited by the quadratic memory cost of storing full attention weights), and separately test purely behavioral accuracy (no attention extraction, hence much cheaper) at larger $N$ and under varying distractor content.

\begin{figure}[h]
    \centering
    \includegraphics[width=\columnwidth]{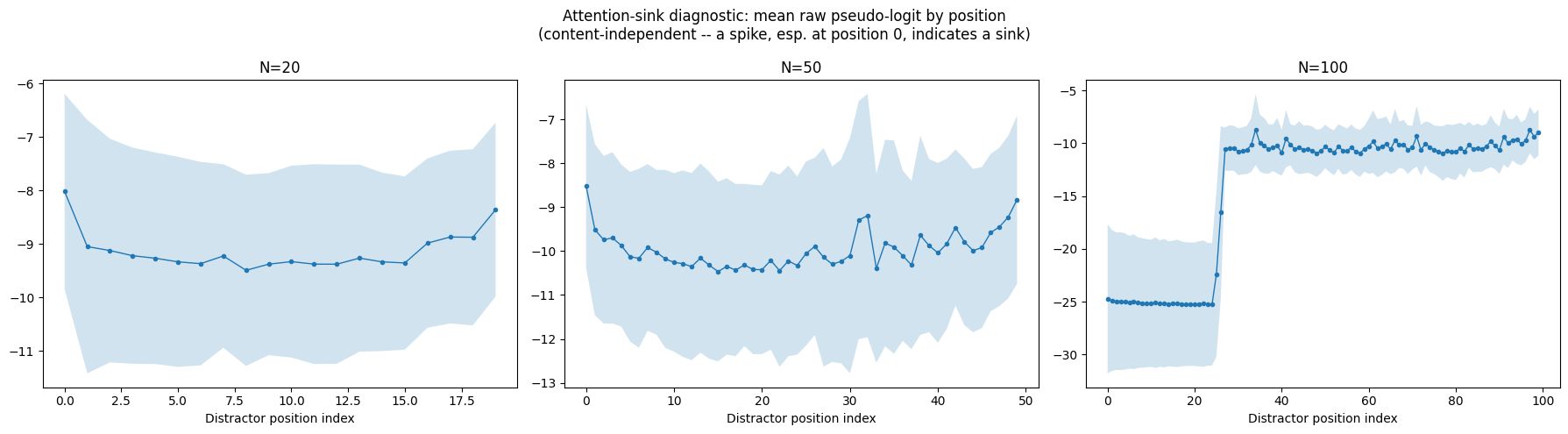}
    \caption{Mean attention by sentence position only, independent of content, pooled across trials/layers/heads.}
    \label{fig:sink}
\end{figure}

Figure~\ref{fig:sink} diagnoses a positional confound at every $N$ tested: attention depends measurably on position alone. At $N=20$ and $N=50$ this reproduces the U-shaped ``lost in the middle'' effect reported by \citet{liu2024lost}; at $N=100$ it instead shows a sharp step, consistent with Gemma-3's architecture interleaving sliding-window and global-attention layers.

\begin{table*}[h]
\centering
\caption{Attention distribution tests by $N$ (34 layers $\times$ 8 heads, 10 trials per $N$).}
\label{tab:attention_tests}
\small
\begin{tabular}{@{}lccc@{}}
\toprule
 & $N=20$ & $N=50$ & $N=100$ \\
\midrule
\multicolumn{4}{@{}l}{\textit{Gaussianity (marginal distribution)}} \\
Shapiro--Wilk $p$              & $4.0\times10^{-26}$ & $3.5\times10^{-33}$ & $<10^{-300}$ \\
Anderson--Darling stat.        & 205.9 & 1298.2 & 35768.7 \\
Excess kurtosis (95\% CI)      & 2.70 [2.35, 3.09] & 6.57 [6.04, 7.03] & 9.12 [8.83, 9.39] \\
\addlinespace
\multicolumn{4}{@{}l}{\textit{Independence (lags 1--10)}} \\
Ljung--Box $Q$                 & 3.74  & 11.50 & 490.60 \\
Ljung--Box $p$                 & 0.958 & 0.320 & $<10^{-99}$ \\
\addlinespace
\multicolumn{4}{@{}l}{\textit{Extreme value (max statistic vs.\ Gumbel)}} \\
KS statistic                   & 0.204 & 0.622 & 0.318 \\
KS $p$                         & $1.2\times10^{-99}$ & $<10^{-300}$ & $2.2\times10^{-244}$ \\
Empirical mean max (std.)      & 2.31 & 4.18 & 3.02 \\
Leading-order scale $\sqrt{2\log N}$    & 2.45 & 2.80 & 3.04 \\
\bottomrule
\end{tabular}
\end{table*}

The Gaussianity tests reject normality at every $N$, and the KS tests reject an exact Gumbel fit at every $N$ for Gemma-3 (Table~\ref{tab:attention_tests}). The Ljung--Box test does not reject independence at $N=20$ or $N=50$ ($p=0.958$ and $p=0.320$, respectively), but rejects it strongly at $N=100$ ($p<10^{-99}$). Thus, the Gaussian i.i.d.\ abstraction is not quantitatively exact, although the nature of the departure changes with context length. Heavy tails may increase extreme-score risk, whereas dependence can either amplify or suppress it. We therefore treat Theorem~\ref{thm:poisoning} as a diagnostic baseline rather than a calibrated quantitative bound for these models.

\begin{figure}[h]
    \centering
    \includegraphics[width=\columnwidth]{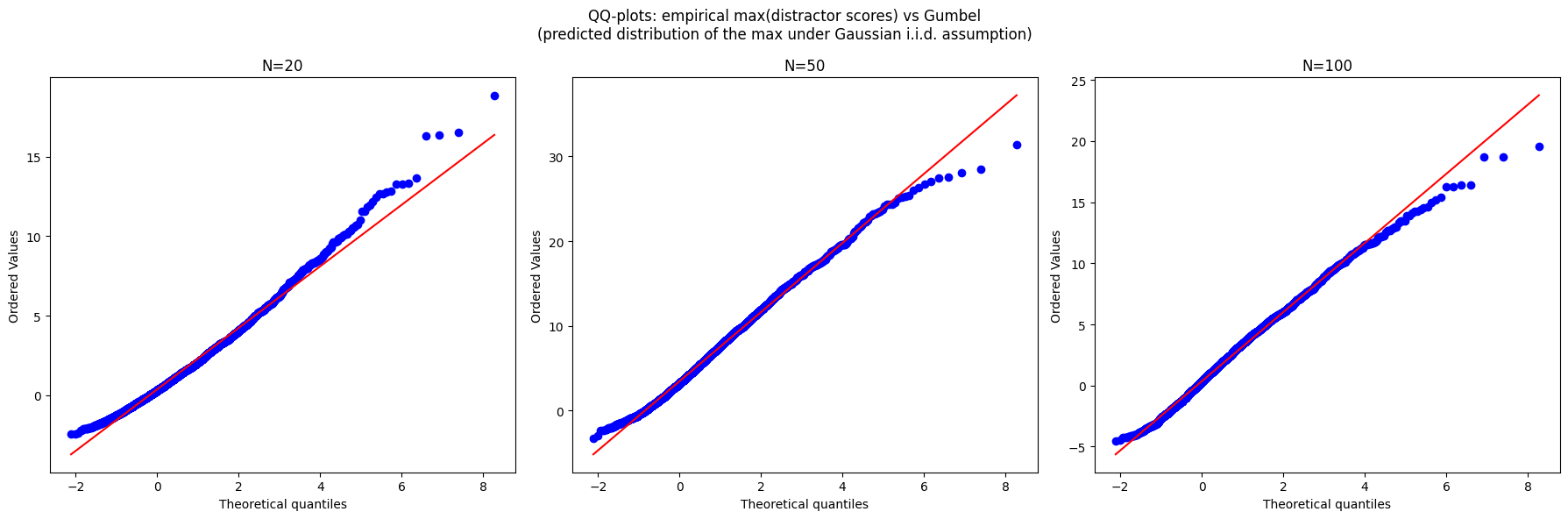}
    \caption{Q--Q plots of the empirical maximum standardized distractor
    scores against the chosen asymptotic Gumbel approximation, by $N$.}
    \label{fig:extreme}
\end{figure}

Figure~\ref{fig:extreme} compares the empirical maximum standardized
distractor scores with an asymptotic Gumbel approximation. Under an
exact i.i.d.\ standard-Gaussian model, the finite-$N$ maximum has CDF
$F_N(x)=\Phi(x)^N$; a Gumbel distribution arises only asymptotically
after appropriate centering and scaling. Rejection of the selected
Gumbel approximation at $N\in\{20,50,100\}$ therefore does not by
itself constitute a direct finite-sample test of the Gaussian i.i.d.\
hypothesis.

\begin{figure}[h]
    \centering
    \includegraphics[width=0.7\columnwidth]{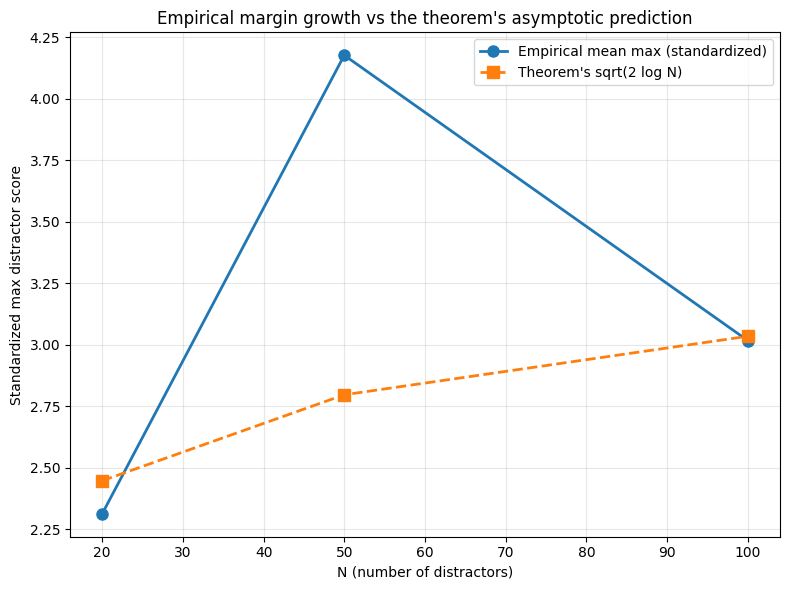}
    \caption{Empirical mean of the standardized maximum distractor score vs.\ the theorem's own asymptotic prediction, $\sqrt{2\log N}$.}
    \label{fig:margin}
\end{figure}

Despite this, the theorem's own margin-growth prediction (Figure~\ref{fig:margin}) tracks the data reasonably closely at two of three points---2.31 vs.\ 2.45 at $N=20$ and 3.02 vs.\ 3.04 at $N=100$---showing the abstraction captures the right order of magnitude even where its distributional assumptions do not hold exactly.

A natural question is whether the findings above reflect a general property of transformer attention or an artifact specific to Gemma-3's architecture, which interleaves sliding-window (local) attention layers with global-attention layers. We repeated the identical experiment on Llama-3-8B-Instruct \citep{grattafiori2024llama3}, which uses full causal (global) attention in every layer, with no sliding-window mechanism. Table~\ref{tab:model_arch} summarizes the two architectures.

\begin{table*}[h]
\centering
\caption{Architectural comparison of the two models tested.}
\label{tab:model_arch}
\small
\begin{tabular}{@{}lcc@{}}
\toprule
 & Gemma-3-4B-IT & Llama-3-8B-Instruct \\
\midrule
Parameters & $\sim$4B & $\sim$8B \\
Layers & 34 & 32 \\
Attention heads & 8 & 32 \\
Attention pattern & Mixed: sliding-window (local) + global layers & Full causal (global) attention, every layer \\
\bottomrule
\end{tabular}
\end{table*}

\begin{figure}[h]
    \centering
    \includegraphics[width=\columnwidth]{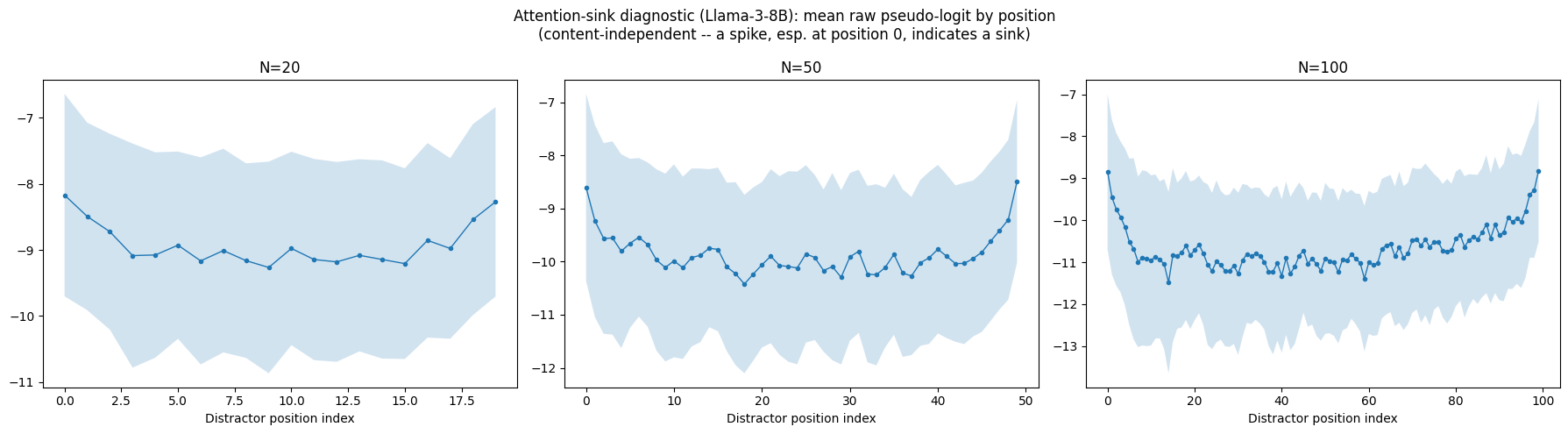}
    \caption{Attention-sink diagnostic on Llama-3-8B-Instruct (see Figure~\ref{fig:sink} for Gemma-3).}
    \label{fig:sink_llama}
\end{figure}

Figure~\ref{fig:sink_llama} shows a smooth positional profile at all
three $N$, with no sharp discontinuity at $N=100$. The absence of the
step in Llama-3 is consistent with an architecture-specific explanation
for the Gemma-3 pattern. However, this two-model comparison does not
isolate sliding-window/global layer mixing as the cause, because the
models also differ in size, training, tokenization, and other
architectural details.

\begin{table*}[h]
\centering
\caption{Attention distribution tests on Llama-3-8B-Instruct, by $N$ (32 layers $\times$ 32 heads, 10 trials per $N$).}
\label{tab:attention_tests_llama}
\small
\begin{tabular}{@{}lccc@{}}
\toprule
 & $N=20$ & $N=50$ & $N=100$ \\
\midrule
\multicolumn{4}{@{}l}{\textit{Gaussianity (marginal distribution)}} \\
Excess kurtosis (95\% CI)      & 2.32 [2.01, 2.71] & 0.98 [0.93, 1.03] & 0.89 [0.86, 0.92] \\
\addlinespace
\multicolumn{4}{@{}l}{\textit{Independence (lags 1--10)}} \\
Ljung--Box $Q$                 & 3.92  & 25.29 & 104.63 \\
Ljung--Box $p$                 & 0.951 & 0.0048 & $<10^{-6}$ \\
\addlinespace
\multicolumn{4}{@{}l}{\textit{Extreme value (max statistic vs.\ Gumbel)}} \\
KS statistic                   & 0.195 & 0.228 & 0.273 \\
Empirical mean max (std.)      & 2.29 & 2.63 & 2.96 \\
Leading-order scale $\sqrt{2\log N}$    & 2.45 & 2.80 & 3.04 \\
\bottomrule
\end{tabular}
\end{table*}

The cross-architecture comparison reveals model-specific departures from the Gaussian i.i.d.\ abstraction (Tables~\ref{tab:attention_tests} and~\ref{tab:attention_tests_llama}). Gemma-3's excess kurtosis increases with $N$ ($2.70\to9.12$), whereas Llama-3's decreases ($2.32\to0.89$). At $N=100$, the Ljung--Box test rejects independence for both models. Because the analyses pool different numbers of layer-head pairs, however, the raw $Q$ statistics are not directly comparable as measures of violation severity. For Gemma-3, the reported KS $p$-values reject an exact Gumbel fit. Table~\ref{tab:attention_tests_llama} reports KS statistics for Llama-3 but not the corresponding $p$-values, so we do not make a formal rejection claim for Llama-3 without those values.

Despite these departures, the empirical mean maximum for Llama-3 is $93\%$, $94\%$, and $97\%$ of the $\sqrt{2\log N}$ prediction at $N=20$, $50$, and $100$, respectively (Figure~\ref{fig:margin_llama}). Thus, the asymptotic scale remains a useful order-of-magnitude reference, even though the exact distributional assumptions are not satisfied.

\begin{figure}[h]
    \centering
    \includegraphics[width=0.7\columnwidth]{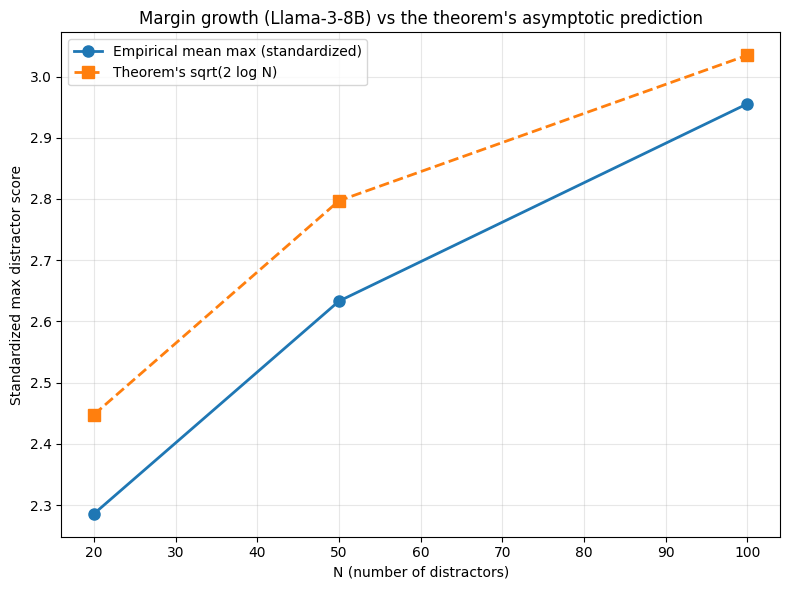}
    \caption{Margin growth on Llama-3-8B-Instruct vs.\ the theorem's asymptotic prediction (cf.\ Figure~\ref{fig:margin} for Gemma).}
    \label{fig:margin_llama}
\end{figure}

Does this internal violation translate into behavioral failure? For each trial we additionally generated the model's real answer and checked whether it contained the true code. Retrieval accuracy was $100\%$ at every $N$ (30/30 trials total; Figure~\ref{fig:retrieval}), while excess kurtosis rose monotonically over the same range. For this task---simple single-fact retrieval against generic filler---the internal assumption violation is not sufficient on its own to produce behavioral failure.

\begin{figure}[h]
    \centering
    \includegraphics[width=\columnwidth]{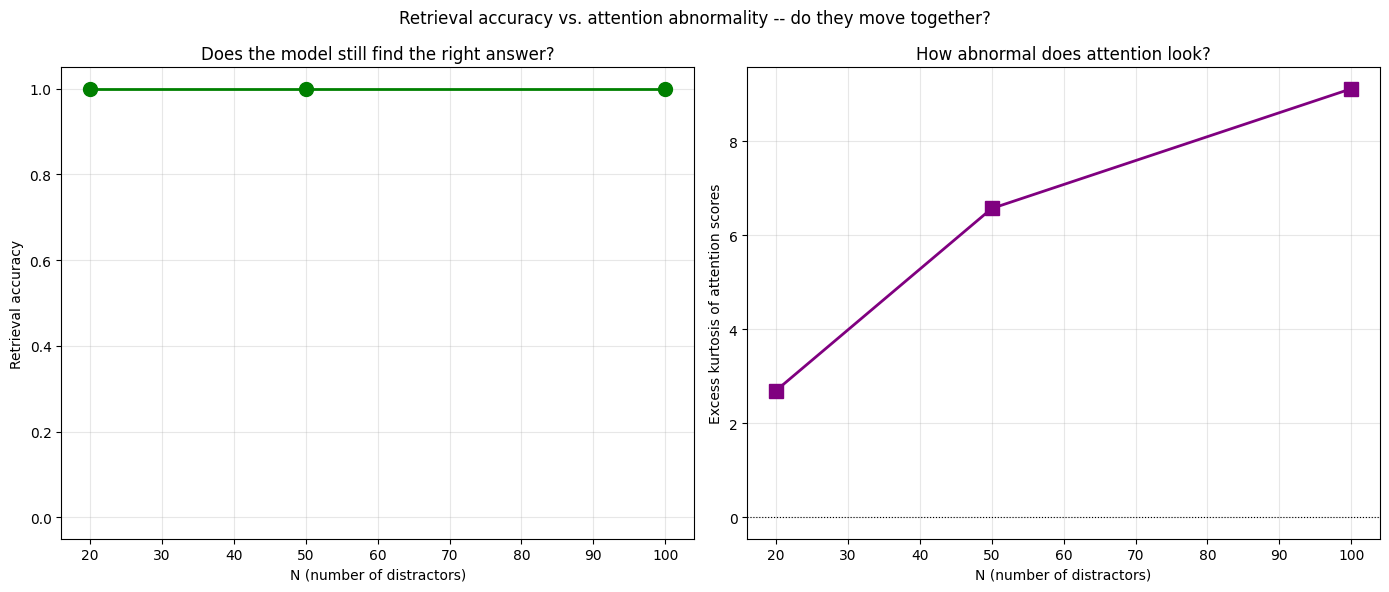}
    \caption{Retrieval accuracy (left) vs.\ excess kurtosis of the attention distribution (right), by $N$.}
    \label{fig:retrieval}
\end{figure}

If neither the assumption violation nor $N$ alone predicts failure, does raw context length matter at all? Extending purely behavioral testing (no attention extraction, hence tractable at much larger $N$) with the same generic-filler distractors to $N=100, 200, 400$ (up to $\sim$5,400 tokens), accuracy remained $100\%$ throughout. Increasing the number of generic-filler distractors from $100$ to $400$ did not reduce observed accuracy for this model on this task. This result applies only to the tested context range and distractor construction.

We therefore varied \emph{what} the distractors said rather than how many there were, holding $N \in \{20, 50, 100\}$ fixed. Low-similarity distractors (generic, topically unrelated filler) and medium-similarity distractors (the same numeric-code format, distinguished by a different label such as ``backup code'') both left accuracy at $100\%$ at every $N$ tested. High-similarity distractors---identical phrasing to the evidence sentence, differing only in the number---collapsed accuracy to near zero at every $N$ tested instead (0.00, 0.07, and 0.00 at $N=20$, $50$, and $100$ respectively). This approximately $N$-invariant pattern over the three tested values supports the paper's distinction (Definition~\ref{def:aliasing}) between raw distractor count $N$ and effective distractor count $N_{\mathrm{eff}}$: over the tested range, confusability is more predictive of failure than raw distractor count. Because this three-condition low/medium/high comparison cannot determine how much confusability is required, we then fixed $N=50$ total distractors and swept $k$, the number of high-similarity decoys mixed into an otherwise generic-filler haystack, from $0$ to $50$ (Table~\ref{tab:dose}, Figure~\ref{fig:dose}).

\begin{table*}[h]
\centering
\caption{Retrieval accuracy versus the number $k$ of confusable
decoys, with $N_{\mathrm{dist}}=50$ total distractors fixed.}
\label{tab:dose}
\small
\begin{tabular}{@{}cccccccccc@{}}
\toprule
$k$ & 0 & 1 & 2 & 3 & 5 & 10 & 20 & 35 & 50 \\
\midrule
Accuracy & 1.00 & 0.67 & 0.33 & 0.40 & 0.07 & 0.13 & 0.07 & 0.07 & 0.00 \\
\bottomrule
\end{tabular}
\end{table*}

Each point is based on $15$ trials, so one additional correct response changes the reported accuracy by $1/15\approx0.067$. Accordingly, the adjacent changes from $k=2$ ($0.33$) to $k=3$ ($0.40$), and from $k=5$ ($0.07$) to $k=10$ ($0.13$), each correspond to only one additional correct response. These local reversals should therefore not be interpreted as evidence of a non-monotonic underlying effect. The robust feature is the overall steep initial decline followed by a low plateau.

Accuracy drops sharply within the first two confusable decoys ($k=0$: $1.00$, $k=1$: $0.67$, $k=2$: $0.33$). A single confusable decoy among fifty distractors reduces observed accuracy by approximately one third, and two reduce it by approximately two thirds. From $k=5$ through $k=35$, observed accuracy remains in the $7\%$--$13\%$ range across a sevenfold increase in $k$. At $k=50$, when every distractor is confusable, observed accuracy is $0/15$. Thus, most of the observed degradation occurs after introducing a small number of confusable decoys, while additional decoys produce comparatively little further change.

\begin{figure}[h]
    \centering
    \includegraphics[width=0.8\columnwidth]{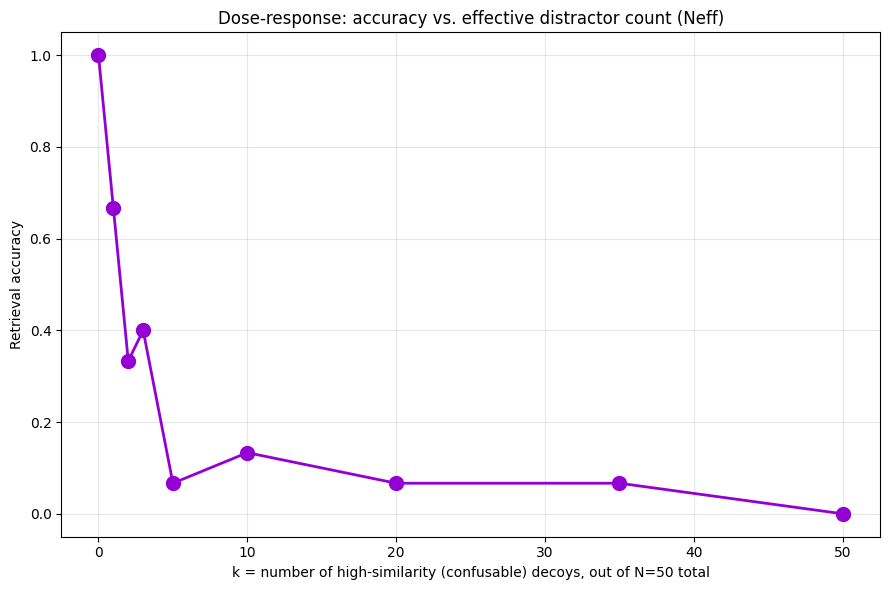}
    \caption{Retrieval accuracy versus the number $k$ of confusable decoys, with $N_{\mathrm{dist}}=50$ total distractors fixed.}    \label{fig:dose}
\end{figure}

In qualitative inspection, some failures under high similarity or larger $k$ included hedging behaviors that were not observed at $k=0$, such as listing multiple candidate codes or generating an unrequested confidence statement or rationale. Because these behaviors were not coded systematically, we treat them as anecdotal indications of uncertainty rather than as a measured result.

Taken together, these results provide a coherent but task-specific picture. The Gaussian i.i.d.\ assumption is not exact, and the form of its violation differs across the two architectures. Nevertheless, the theorem's margin-growth prediction remains close in order of magnitude to the observed maxima. In this retrieval task, behavioral failure tracks the number of textually confusable decoys more closely than raw context length over the tested range. Only a small number of such decoys is needed to produce most of the observed degradation. These findings support interpreting $N$ in the bound as an effective distractor count $N_{\rm eff}$, while not establishing that raw context length is irrelevant for other tasks, models, or longer-context regimes.

\section{Lost-in-the-middle reanalysis}
\label{app:lostmiddle}

Definition~\ref{def:curve} defines the poisoning curve $P(N) = A(N_0) - A(N)$: the drop in accuracy as context length $N$ grows. This has so far been supported partly by citing prior work \citep{liu2024lost}; here we re-run a version of their setup directly on our own model to report a concrete, reproducible number.

We use the publicly released data from \citet{liu2024lost}, which pair real questions from the Natural Questions dataset \citep{kwiatkowski2019natural} with Wikipedia passages retrieved using Contriever-MSMARCO \citep{izacard2021contriever}. Each example has one gold passage containing the true answer and several distractor passages. We sampled 40 fixed questions, reused across every condition so results are paired rather than confounded by question difficulty, and evaluated them on Gemma-3-4B-IT \citep{gemmateam2025gemma3} (Hugging Face \texttt{transformers}, greedy decoding). A model's answer is scored correct if it contains the known gold answer string.

Two conditions were tested. In the position condition,
$N_{\mathrm{docs}}=20$ documents are fixed and the gold passage is
moved to zero-indexed position $0$, $4$, $9$, $14$, or $19$. In
the scaling condition, the gold passage is held at zero-indexed
position $0$, which had the highest observed accuracy in the first
condition, while $N_{\mathrm{docs}}\in\{10,20,30\}$ is varied. The
second condition therefore measures the effect of document count
while holding the gold passage's position fixed.

\begin{figure}[h]
    \centering
    \includegraphics[width=0.65\columnwidth]{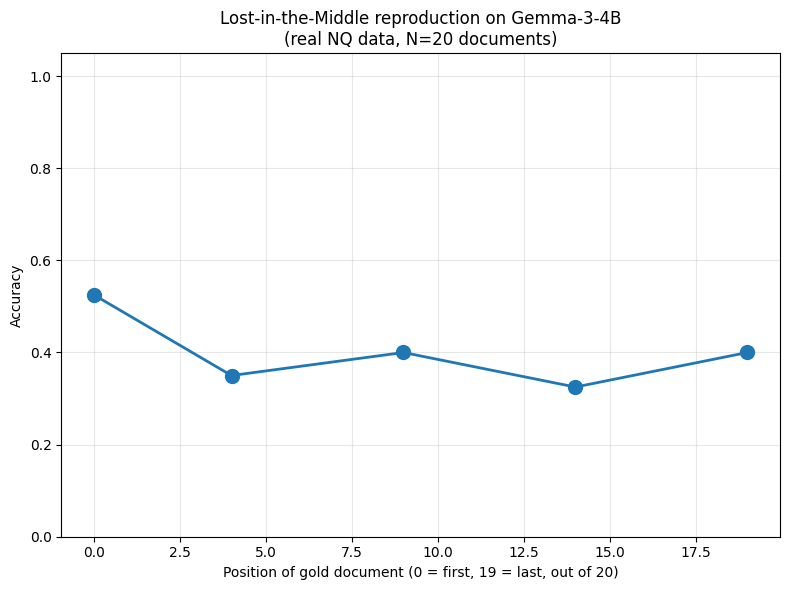}
    \caption{Accuracy vs.\ position of the gold document, $N=20$, 40 real NQ questions.}
    \label{fig:position}
\end{figure}

Accuracy is highest when the gold passage is first ($52.5\%$) and
ranges from $32.5\%$ to $40.0\%$ at the other tested positions,
a decrease of $12.5$--$20.0$ percentage points
(Figure~\ref{fig:position}). This provides descriptive evidence of
a position effect in this sample and is consistent with
\citet{liu2024lost}.

\begin{table}[h]
\centering
\caption{Accuracy and poisoning curve $P(N) = A(10) - A(N)$, gold passage fixed at position 0.}
\label{tab:poisoning}
\small
\begin{tabular}{@{}lccc@{}}
\toprule
$N$ & 10 & 20 & 30 \\
\midrule
$A(N)$ & 0.550 & 0.525 & 0.550 \\
$P(N)$ & 0.000 & 0.025 & 0.000 \\
\bottomrule
\end{tabular}
\end{table}

\begin{figure}[h]
    \centering
    \includegraphics[width=\columnwidth]{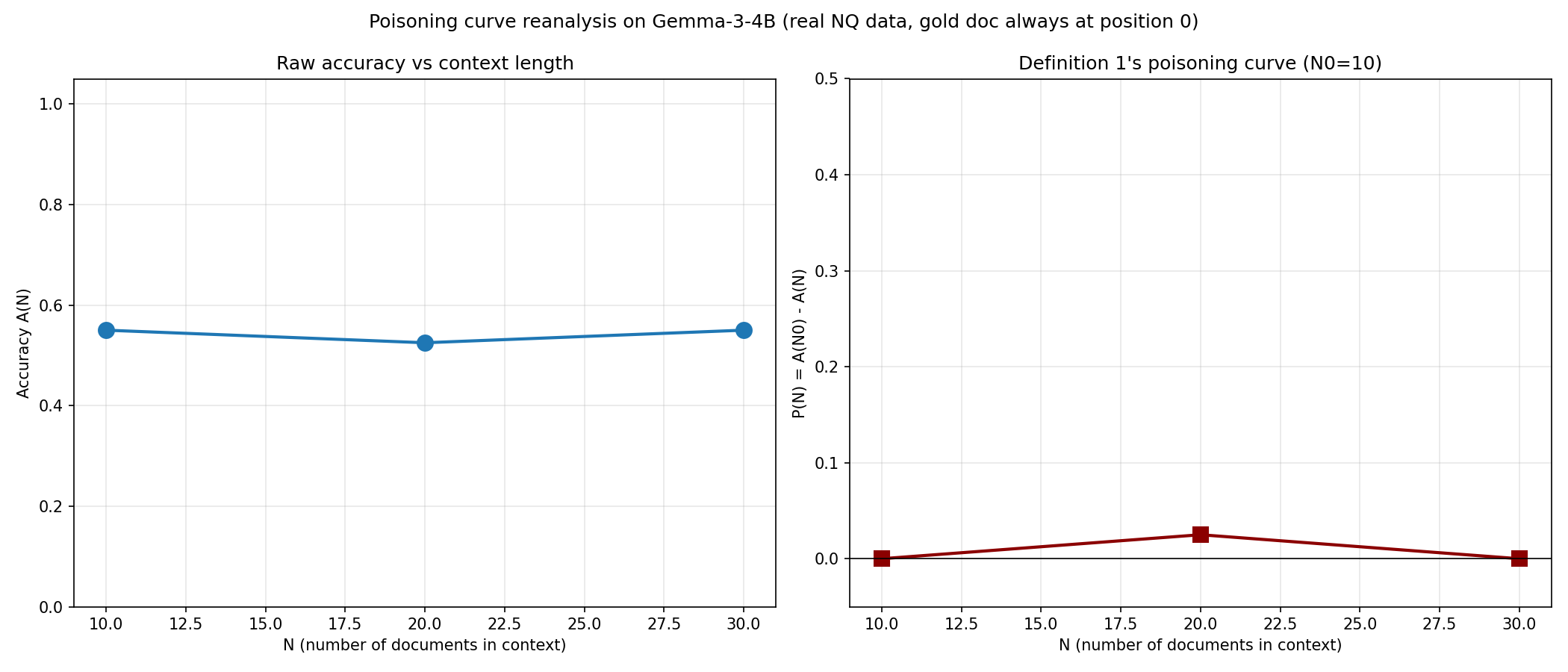}
    \caption{Accuracy and $P(N)$ vs.\ context length, gold passage fixed at position 0.}
    \label{fig:poisoning}
\end{figure}

With the gold passage fixed at position $0$, which had the highest observed accuracy, $P(N)$ remains close to zero across $N\in\{10,20,30\}$ (Table~\ref{tab:poisoning}, Figure~\ref{fig:poisoning}). We therefore observe no monotonic decrease in accuracy with document count over this range. Together with the position experiment, these results indicate that position has a larger observed effect than raw document count for this model and within the tested range. They do not establish that position remains the dominant factor at larger context sizes or for other models. The result is compatible with Theorem~\ref{thm:poisoning}'s emphasis on effective distractor exposure and Definition~\ref{def:aliasing}, but this experiment does not measure $N_{\rm eff}$ directly.

Two limitations are worth noting. First, the study uses only $40$ questions, so the estimates have substantial uncertainty. Second, the scaling condition tests only up to $N=30$ documents. Whether $P(N)$ remains near zero at larger values of $N$, or whether position and length effects interact at greater scale, remains unresolved.


\section{Empirical Test of the Gate-Mitigation Prediction}
\label{sec:gate-experiment}

Proposition~\ref{prop:gate} predicts that replacing a raw candidate set
of $N$ context chunks with a bounded retrieved set of $K\!\ll\!N$
chunks trades context-length interference against a retrieval-error
penalty $\hat{\eta}=1-\text{Recall@}K$. We test this prediction
directly, following the reporting protocol in
\nameref{sec:evaluation-protocol}: we measure answer accuracy and
evidence recall separately, and we isolate the model's use of retrieved
evidence from the quality of retrieval itself. We emphasize the scope
of the experiment: it tests the qualitative tradeoff described by the
proposition, but it does not validate the theorem's Gaussian-logit
assumptions or its $\sqrt{\log N}$ asymptotic rate.

\subsection{Design}
\label{sec:gate-design}

\paragraph{Benchmark.}

We use RULER's \texttt{qa\_2} task \citep{hsieh2024ruler}, which is built on HotpotQA
\citep{yang2018hotpotqa}. Each instance is a multi-hop question whose two gold supporting paragraphs
are embedded among distractor paragraphs presented as numbered
\texttt{Document $n$:} blocks. We construct instances directly from
the HotpotQA distractor development set to preserve HotpotQA's gold
supporting-fact titles, which allows us to measure evidence recall
rather than relying on an answer-string proxy.

Starting from each question's native two gold and eight distractor
paragraphs, we add distractor paragraphs drawn from a global pool until
a target token length is reached. We evaluate seven lengths,
$\{4,8,16,32\}$K and $\{128,256,512\}$K tokens, using the same
$100$ questions at every length so that comparisons are paired across
both conditions and lengths. Lengths are measured with the
\texttt{o200k\_base} tokenizer. The median raw candidate count $N$
grows from $29$ chunks at $4$K to $3{,}711$ chunks at $512$K
(Table~\ref{tab:main}).

\paragraph{Conditions.}
Each instance is answered under two conditions that differ only in the documents supplied. The \textbf{raw} condition passes all $N$
documents in source order. The \textbf{gated} condition scores each document against the question using BM25 \citep{robertson2009probabilistic}, retains the top-$K$
documents, and restores them to their original source order before
answering. Restoring source order ensures that the intervention changes
context length without also changing document ordering.

Both conditions use the same prompt template, system instruction, and
decoding configuration: temperature $0$, top-$p$ $1$, a maximum of
$64$ generated tokens, and one generation call. We deliberately omit
a quote-the-evidence-first step from the gated condition because such a
step would confound retrieval gating with an additional inference
procedure.

\paragraph{Gate and $K$ selection.}
Each \texttt{Document $n$:} block is treated as one retrieval chunk.
Queries and documents use identical lowercasing and tokenization for
BM25 scoring. Because the benchmark is multi-hop, the gate succeeds
only when it retrieves all annotated supporting documents:
\begin{equation}
\mathrm{Hit}_i(K)
=
\mathrm{Hit}_i(K)
=
\1\!\left\{G_i\subseteq R_i(K)\right\}.
\end{equation}
The estimated retrieval penalty is
\begin{equation}
\hat{\eta}
=
1-
\frac{1}{M}
\sum_{i=1}^{M}
\mathrm{Hit}_i(K).
\end{equation}

On a disjoint $100$-question development split, we sweep
$K\in\{2,4,8,16\}$ and select the smallest value achieving
all-support Recall@$K\geq0.90$ at every length through $32$K. This
procedure selects $K=16$. Retrieval and selection of $K$ are
performed offline and require no test-set model calls.

\paragraph{Model.}
All answers are generated by GPT-4.1
(\texttt{gpt-4.1\_2025-04-14}) at temperature $0$. We verified that
the serving endpoint accepts inputs of at least $520$K tokens, and no
raw prompt was truncated. The study consists of $1{,}400$
generations, forming $700$ paired raw--gated comparisons
($100\times7$ pairs), with zero endpoint errors.

\subsection{Metrics}
\label{sec:gate-metrics}

We report normalized exact match (EM) and token F1 using the official
HotpotQA scorer. The primary effect is the paired difference
\begin{equation}
\Delta\mathrm{EM}(N,K)
=
A_{\mathrm{gate}}-A_{\mathrm{raw}},
\end{equation}
with $95\%$ confidence intervals computed from $10{,}000$ paired
bootstrap samples and paired binary outcomes evaluated using an exact
McNemar test.

To separate evidence use from retrieval quality, we also report
conditional accuracy on the retrieval-hit subset and its difference
\begin{equation}
\Delta_{\mathrm{use}}
=
A_{\mathrm{gate}\mid\mathrm{hit}}
-
A_{\mathrm{raw}\mid\mathrm{hit}}.
\end{equation}
On this subset, the gate is known to have retained all annotated
supporting evidence. A positive $\Delta_{\mathrm{use}}$ is therefore
consistent with a benefit from reducing competing context rather than
merely from successful retrieval.

\subsection{Results}
\label{sec:gate-results}

\begin{table*}[t]
\centering
\scriptsize
\setlength{\tabcolsep}{2.8pt}
\caption{GPT-4.1 on RULER \texttt{qa\_2}/HotpotQA: raw long context versus a BM25 top-$K$ retrieval gate with $K=16$, frozen on a disjoint development split. Each length uses the same $100$ questions at temperature $0$. The retrieval penalty is $\hat{\eta}=1-\text{Recall@}K$. $\Delta\mathrm{EM}$ is the paired gated-minus-raw difference with a $10{,}000$-sample paired-bootstrap $95\%$ confidence interval. $\Delta_{\mathrm{use}}$ is the gated-minus-raw difference restricted to instances for which all annotated supporting evidence was retrieved. McNemar $p$ is the exact two-sided test.}
\label{tab:main}
\resizebox{\textwidth}{!}{%
\begin{tabular}{lrr*{11}{c}}
\toprule
Ctx & \shortstack{Median\\raw $N$} & $K$ & \shortstack{Recall\\@$K$} & $\hat{\eta}$ & \shortstack{Raw\\EM} & \shortstack{Gate\\EM} & $\Delta\mathrm{EM}$ [95\% CI] & \shortstack{Raw\\F1} & \shortstack{Gate\\F1} & \shortstack{Gate\\EM$\mid h$} & \shortstack{Raw\\EM$\mid h$} & $\Delta_{\mathrm{use}}$ & \shortstack{McNemar\\$p$} \\
\midrule
4K   & 29   & 16 & 0.94 & 0.06 & 0.660 & 0.640 & $-0.020$ [$-0.070$, $+0.030$] & 0.768 & 0.751 & 0.681 & 0.660 & 0.021  & 0.727 \\
8K   & 58   & 16 & 0.94 & 0.06 & 0.650 & 0.620 & $-0.030$ [$-0.080$, $+0.020$] & 0.768 & 0.738 & 0.660 & 0.660 & 0.000  & 0.453 \\
16K  & 115  & 16 & 0.94 & 0.06 & 0.680 & 0.630 & $-0.050$ [$-0.110$, $+0.010$] & 0.792 & 0.723 & 0.670 & 0.691 & $-0.021$ & 0.180 \\
32K  & 232  & 16 & 0.92 & 0.08 & 0.650 & 0.650 & $+0.000$ [$-0.050$, $+0.050$] & 0.782 & 0.757 & 0.696 & 0.674 & 0.022  & 1.000 \\
128K & 930  & 16 & 0.86 & 0.14 & 0.590 & 0.640 & $+0.050$ [$-0.010$, $+0.110$] & 0.724 & 0.746 & 0.709 & 0.640 & 0.070  & 0.227 \\
256K & 1857 & 16 & 0.82 & 0.18 & 0.590 & 0.580 & $-0.010$ [$-0.090$, $+0.070$] & 0.715 & 0.694 & 0.683 & 0.622 & 0.061  & 1.000 \\
512K & 3711 & 16 & 0.79 & 0.21 & 0.570 & 0.590 & $+0.020$ [$-0.060$, $+0.100$] & 0.704 & 0.705 & 0.709 & 0.595 & 0.114  & 0.815 \\
\bottomrule
\end{tabular}%
}
\end{table*}

\begin{figure*}[t]
\centering
\includegraphics[width=\textwidth]{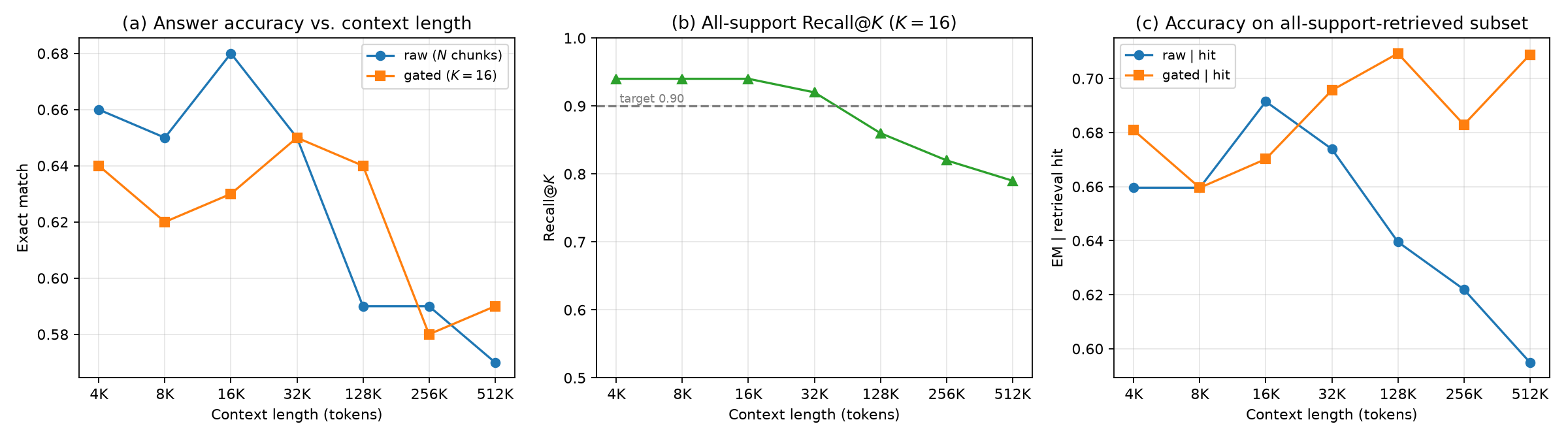}
\caption{GPT-4.1 with $K=16$. \textbf{(a)} Raw accuracy is approximately stable through $32$K and then decreases with context length, while gated accuracy varies less. \textbf{(b)} A fixed $K=16$ meets the $0.90$ all-support recall target through $32$K, after which recall decreases as the candidate set grows. \textbf{(c)} On instances for which the gate retrieves all annotated supporting evidence, raw accuracy decreases with context length while gated accuracy remains comparatively stable, providing evidence of improved evidence use after competing context is removed.}
\label{fig:main}
\end{figure*}

Table~\ref{tab:main} and Figure~\ref{fig:main} support four findings.

\paragraph{(1) No measurable poisoning through $32$K.}
Up to $32$K, raw EM remains approximately stable
($0.66\!\to\!0.65$), and the gate provides no statistically detectable
net benefit. The observed $\Delta\mathrm{EM}$ values remain within
$[-0.05,+0.05]$, all reported confidence intervals include zero, and
McNemar $p\geq0.18$.

\paragraph{(2) Degradation appears beyond $32$K.}
From $32$K to $512$K, raw EM decreases from $0.65$ to $0.57$ and
raw F1 decreases from $0.78$ to $0.70$. No comparable downward trend
is observed among the tested lengths from $4$K through $32$K. The
gated condition exhibits a smaller overall decrease across the longest
tested contexts.

\paragraph{(3) The gate improves evidence use on retrieval-hit instances.}
Figure~\ref{fig:main}(c) restricts evaluation to questions for which
the gate retrieves all annotated supporting evidence. On this subset,
raw accuracy decreases from $0.69$ at $16$K to $0.595$ at $512$K,
whereas gated accuracy remains near $0.71$. The corresponding
$\Delta_{\mathrm{use}}$ values are $+0.070$, $+0.061$, and $+0.114$
at $128$K, $256$K, and $512$K, respectively. Because both conditions
contain all annotated gold evidence on this subset, the pattern is
consistent with a benefit from removing competing distractor context,
rather than merely from successful retrieval.

\paragraph{(4) A fixed $K$ incurs a growing retrieval penalty.}
With $K$ fixed at $16$, all-support recall decreases from $0.94$ to
$0.79$ as the raw candidate count grows to approximately $3{,}700$
chunks. Correspondingly, $\hat{\eta}$ increases from $0.06$ to $0.21$.
The evidence-use benefit is therefore offset by the $14\%$--$21\%$ of
instances for which the bounded gate omits required evidence, and the
net $\Delta\mathrm{EM}$ remains near zero at the longest contexts.
This pattern matches the qualitative tradeoff in
Proposition~\ref{prop:gate}: reducing the candidate set from $N$ to
$K$ can improve evidence use, but that benefit is offset when the
retrieval penalty $\hat{\eta}$ grows with the candidate set.

\subsection{Scaling $K$ to Preserve Evidence Recall}
\label{sec:gate-largerk}

To separate the evidence-use benefit from the retrieval penalty, we
re-run only the gated condition at $128$K, $256$K, and $512$K with a
length-specific value of $K$ selected on the development split to
restore all-support Recall@$K\geq0.90$. The resulting values are
$K=64$, $128$, and $192$, respectively. The raw predictions are
unchanged and reused from the primary experiment. Even at $K=192$,
the gated prompt remains approximately $25$K tokens, compared with
approximately $504$K tokens in the raw condition.

\begin{table*}[t]
\centering
\small
\setlength{\tabcolsep}{4pt}
\caption{Scaling $K$ to restore recall at extreme lengths. Only the gated arm is re-run; raw predictions are cached and unchanged. Each length-specific value of $K$ is selected on the development split to achieve Recall@$K\geq0.90$.}
\label{tab:largerk}
\begin{tabular}{lrcccccc}
\toprule
Ctx & $K$ & Recall@$K$ & Raw EM & Gate EM & $\Delta\mathrm{EM}$ [95\% CI] & $\Delta_{\mathrm{use}}$ & McNemar $p$ \\
\midrule
128K & 16  & 0.86 & 0.590 & 0.640 & $+0.050$ [$-0.010$, $+0.110$] & 0.070 & 0.227 \\
     & 64  & 0.94 & 0.590 & 0.620 & $+0.030$ [$-0.020$, $+0.080$] & 0.043 & 0.453 \\
\addlinespace
256K & 16  & 0.82 & 0.590 & 0.580 & $-0.010$ [$-0.090$, $+0.070$] & 0.061 & 1.000 \\
     & 128 & 0.94 & 0.590 & 0.640 & $+0.050$ [$-0.010$, $+0.120$] & 0.064 & 0.227 \\
\addlinespace
512K & 16  & 0.79 & 0.570 & 0.590 & $+0.020$ [$-0.060$, $+0.100$] & 0.114 & 0.815 \\
     & 192 & 0.93 & 0.570 & 0.640 & $+0.070$ [$+0.000$, $+0.140$] & 0.097 & 0.092 \\
\bottomrule
\end{tabular}
\end{table*}

\begin{figure*}[t]
\centering
\includegraphics[width=\textwidth]{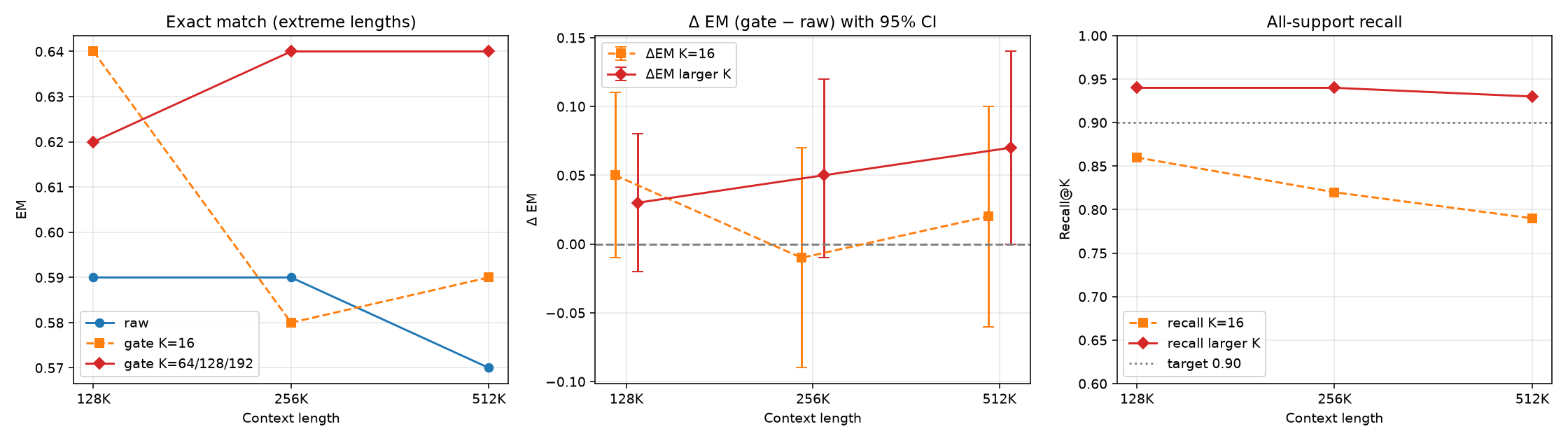}
\caption{Comparison between fixed $K=16$ and a retrieval budget scaled to maintain all-support Recall@$K$ of at least $0.90$ at extreme context lengths. \textbf{Left:} gated accuracy remains comparatively stable under the scaled retrieval budget while raw accuracy decreases. \textbf{Middle:} the paired gated-minus-raw difference becomes more positive after the retrieval penalty is reduced. \textbf{Right:} the scaled values of $K$ maintain recall above the target where fixed $K=16$ does not.}
\label{fig:largerk}
\end{figure*}

As shown in Table~\ref{tab:largerk} and
Figure~\ref{fig:largerk}, restoring recall makes the net gate advantage
more positive at the longest tested contexts. At $512$K, increasing
$K$ from $16$ to $192$ raises gated EM from $0.59$ to $0.64$, while
raw EM is $0.57$. The corresponding $\Delta\mathrm{EM}$ increases
from $+0.02$ to $+0.070$, with a $95\%$ confidence interval of
$[+0.000,+0.140]$ and McNemar $p=0.092$. At $256$K,
$\Delta\mathrm{EM}$ increases from $-0.01$ to $+0.050$.

At $128$K, fixed $K=16$ already retains most annotated support
(recall $0.86$), and increasing $K$ slightly reduces
$\Delta\mathrm{EM}$ from $+0.05$ to $+0.03$, consistent with
reintroducing additional distractors. In the logistic model
\begin{equation}
\mathrm{correct}
\sim
\mathrm{condition}
\times
\log_2 N,
\end{equation}
the estimated gate-by-length interaction increases from $+0.035$ with
fixed $K=16$ to $+0.084$ under the scaled retrieval budget. These
point estimates suggest that the retrieval budget may need to grow
with the candidate set to maintain evidence recall, rather than being
held fixed independently of $N$.

\subsection{Discussion and Limitations}
\label{sec:gate-discussion}

The experiment supports the gate-mitigation prediction within the
tested regime. For GPT-4.1, no measurable degradation is observed
through $32$K. At longer contexts, raw accuracy decreases, while the
retrieval-hit analysis indicates that the gate can improve the model's
use of evidence that remains available. The net accuracy benefit
becomes more positive when $K$ is increased to keep the retrieval
penalty $\hat{\eta}$ small. Together, these results are consistent
with the $N\!\to\!K$ versus $\hat{\eta}$ tradeoff described by
Proposition~\ref{prop:gate}.

Several limitations bound these claims. First, with $M=100$ questions,
the strongest net effect at $512$K remains borderline: the confidence
interval has a lower bound of zero and McNemar $p\approx0.09$. A
larger evaluation set would improve precision; under standard
square-root scaling, doubling the sample size would reduce standard
errors by approximately a factor of $1/\sqrt{2}$, not by one half.

Second, we study a single hosted model. Other models may enter the
degradation regime at different context lengths, so replication across
model families and deployment settings is needed to assess the
generality of finding~(2).

Third, contexts are constructed and measured using a tokenizer proxy
rather than the proprietary serving tokenizer. The reported lengths
are therefore internally consistent but may not exactly equal
server-side token counts. In addition, temperature-$0$ hosted decoding
is not necessarily perfectly deterministic.

Finally, as stated at the outset, these results test the proposition's
qualitative tradeoff rather than the theorem's distributional
assumptions or asymptotic rate.

%% file: aaai2027.bib
@inproceedings{an2024leval,
  title         = {{L}-Eval: Instituting Standardized Evaluation for Long Context Language Models},
  author        = {An, Chenxin and Gong, Shansan and Zhong, Ming and Zhao, Xingjian and Li, Mukai and Zhang, Jun and Kong, Lingpeng and Qiu, Xipeng},
  booktitle     = {Proceedings of the 62nd Annual Meeting of the Association for Computational Linguistics},
  year          = {2024}
}

@inproceedings{bai2024longbench,
  title         = {LongBench: A Bilingual, Multitask Benchmark for Long Context Understanding},
  author        = {Bai, Yushi and Lv, Xin and Zhang, Jiajie and Lyu, Hongchang and Tang, Jiankai and Huang, Zhidian and Du, Zhengxiao and Liu, Xiao and Zeng, Aohan and Hou, Lei and Dong, Yuxiao and Tang, Jie and Li, Juanzi},
  booktitle     = {Proceedings of the 62nd Annual Meeting of the Association for Computational Linguistics},
  year          = {2024}
}

@misc{beltagy2020longformer,
  title         = {Longformer: The Long-Document Transformer},
  author        = {Beltagy, Iz and Peters, Matthew E. and Cohan, Arman},
  year          = {2020},
  eprint        = {2004.05150},
  archiveprefix = {arXiv},
  primaryclass  = {cs.CL}
}

@inproceedings{brown2020language,
  title         = {Language Models Are Few-Shot Learners},
  author        = {Brown, Tom B. and Mann, Benjamin and Ryder, Nick and Subbiah, Melanie and Kaplan, Jared and Dhariwal, Prafulla and Neelakantan, Arvind and Shyam, Pranav and Sastry, Girish and Askell, Amanda and others},
  booktitle     = {Advances in Neural Information Processing Systems},
  volume        = {33},
  pages         = {1877--1901},
  year          = {2020}
}

@misc{chen2023positionalinterpolation,
  title         = {Extending Context Window of Large Language Models via Positional Interpolation},
  author        = {Chen, Shouyuan and Wong, Sherman and Chen, Liangjian and Tian, Yuandong},
  year          = {2023},
  eprint        = {2306.15595},
  archiveprefix = {arXiv},
  primaryclass  = {cs.CL}
}

@inproceedings{choromanski2021rethinking,
  title         = {Rethinking Attention with Performers},
  author        = {Choromanski, Krzysztof and Likhosherstov, Valerii and Dohan, David and Song, Xingyou and Gane, Andreea and Sarlos, Tamas and Hawkins, Peter and Davis, Jared and Mohiuddin, Afroz and Kaiser, Lukasz and Belanger, David and Colwell, Lucy and Weller, Adrian},
  booktitle     = {International Conference on Learning Representations},
  year          = {2021}
}

@inproceedings{dai2019transformerxl,
  title         = {Transformer-{XL}: Attentive Language Models beyond a Fixed-Length Context},
  author        = {Dai, Zihang and Yang, Zhilin and Yang, Yiming and Carbonell, Jaime and Le, Quoc V. and Salakhutdinov, Ruslan},
  booktitle     = {Proceedings of the 57th Annual Meeting of the Association for Computational Linguistics},
  pages         = {2978--2988},
  year          = {2019}
}

@inproceedings{dao2022flashattention,
  title         = {FlashAttention: Fast and Memory-Efficient Exact Attention with {IO}-Awareness},
  author        = {Dao, Tri and Fu, Daniel Y. and Ermon, Stefano and Rudra, Atri and Re, Christopher},
  booktitle     = {Advances in Neural Information Processing Systems},
  volume        = {35},
  pages         = {16344--16359},
  year          = {2022}
}

@inproceedings{devlin2019bert,
  title         = {{BERT}: Pre-training of Deep Bidirectional Transformers for Language Understanding},
  author        = {Devlin, Jacob and Chang, Ming-Wei and Lee, Kenton and Toutanova, Kristina},
  booktitle     = {Proceedings of NAACL-HLT},
  pages         = {4171--4186},
  year          = {2019}
}

@inproceedings{du2025context,
  title         = {Context Length Alone Hurts {LLM} Performance Despite Perfect Retrieval},
  author        = {Du, Yufeng and Tian, Minyang and Ronanki, Srikanth and Rongali, Subendhu and Bodapati, Sravan and Galstyan, Aram and Wells, Azton and Schwartz, Roy and Huerta, Eliu A. and Peng, Hao},
  booktitle     = {Findings of the Association for Computational Linguistics: EMNLP},
  pages         = {23281--23298},
  year          = {2025}
}

@article{gemmateam2025gemma3,
  title   = {{Gemma 3} Technical Report},
  author  = {{Gemma Team}},
  journal = {arXiv preprint arXiv:2503.19786},
  year    = {2025}
}

@inproceedings{guu2020realm,
  title         = {{REALM}: Retrieval-Augmented Language Model Pre-Training},
  author        = {Guu, Kelvin and Lee, Kenton and Tung, Zora and Pasupat, Panupong and Chang, Ming-Wei},
  booktitle     = {International Conference on Machine Learning},
  pages         = {3929--3938},
  year          = {2020}
}

@inproceedings{hsieh2024ruler,
  title         = {{RULER}: What's the Real Context Size of Your Long-Context Language Models?},
  author        = {Hsieh, Cheng-Ping and Sun, Simeng and Kriman, Samuel and Acharya, Shantanu and Rekesh, Dima and Jia, Fei and Ginsburg, Boris},
  booktitle     = {Conference on Language Modeling (COLM)},
  year          = {2024},
  eprint        = {2404.06654},
  archiveprefix = {arXiv},
  primaryclass  = {cs.CL}
}

@inproceedings{izacard2021leveraging,
  title         = {Leveraging Passage Retrieval with Generative Models for Open Domain Question Answering},
  author        = {Izacard, Gautier and Grave, Edouard},
  booktitle     = {Proceedings of the 16th Conference of the European Chapter of the Association for Computational Linguistics},
  pages         = {874--880},
  year          = {2021}
}

@inproceedings{jiang2023longllmlingua,
  title         = {Long{LLM}Lingua: Accelerating and Enhancing {LLMs} in Long Context Scenarios via Prompt Compression},
  author        = {Jiang, Huiqiang and Wu, Qianhui and Luo, Xufang and Li, Dongsheng and Lin, Chin-Yew and Yang, Yuqing and Qiu, Lili},
  booktitle     = {Proceedings of the 2023 Conference on Empirical Methods in Natural Language Processing},
  year          = {2023}
}

@inproceedings{katharopoulos2020transformers,
  title         = {Transformers are {RNNs}: Fast Autoregressive Transformers with Linear Attention},
  author        = {Katharopoulos, Angelos and Vyas, Apoorv and Pappas, Nikolaos and Fleuret, Francois},
  booktitle     = {International Conference on Machine Learning},
  pages         = {5156--5165},
  year          = {2020}
}

@article{kwiatkowski2019natural,
  title         = {Natural Questions: A Benchmark for Question Answering Research},
  author        = {Kwiatkowski, Tom and Palomaki, Jennimaria and Redfield, Olivia and Collins, Michael and Parikh, Ankur and Alberti, Chris and Epstein, Danielle and Polosukhin, Illia and Devlin, Jacob and Lee, Kenton and Toutanova, Kristina and Jones, Llion and Kelcey, Matthew and Chang, Ming-Wei and Dai, Andrew M. and Uszkoreit, Jakob and Le, Quoc and Petrov, Slav},
  journal       = {Transactions of the Association for Computational Linguistics},
  volume        = {7},
  pages         = {453--466},
  year          = {2019}
}

@inproceedings{lewis2020rag,
  title         = {Retrieval-Augmented Generation for Knowledge-Intensive {NLP} Tasks},
  author        = {Lewis, Patrick and Perez, Ethan and Piktus, Aleksandra and Petroni, Fabio and Karpukhin, Vladimir and Goyal, Naman and Kuttler, Heinrich and Lewis, Mike and Yih, Wen-tau and Rocktaschel, Tim and Riedel, Sebastian and Kiela, Douwe},
  booktitle     = {Advances in Neural Information Processing Systems},
  volume        = {33},
  pages         = {9459--9474},
  year          = {2020}
}

@article{liu2024lost,
  title         = {Lost in the Middle: How Language Models Use Long Contexts},
  author        = {Liu, Nelson F. and Lin, Kevin and Hewitt, John and Paranjape, Ashwin and Bevilacqua, Michele and Petroni, Fabio and Liang, Percy},
  journal       = {Transactions of the Association for Computational Linguistics},
  volume        = {12},
  pages         = {157--173},
  year          = {2024},
  doi           = {10.1162/tacl_a_00638}
}

@misc{peng2023yarn,
  title         = {{YaRN}: Efficient Context Window Extension of Large Language Models},
  author        = {Peng, Bowen and Quesnelle, Jeffrey and Fan, Honglu and Shippole, Enrico},
  year          = {2023},
  eprint        = {2309.00071},
  archiveprefix = {arXiv},
  primaryclass  = {cs.CL}
}

@inproceedings{press2022train,
  title         = {Train Short, Test Long: Attention with Linear Biases Enables Input Length Extrapolation},
  author        = {Press, Ofir and Smith, Noah A. and Lewis, Mike},
  booktitle     = {International Conference on Learning Representations},
  year          = {2022}
}

@inproceedings{rae2020compressive,
  title         = {Compressive Transformers for Long-Range Sequence Modelling},
  author        = {Rae, Jack W. and Potapenko, Anna and Jayakumar, Siddhant M. and Lillicrap, Timothy P.},
  booktitle     = {International Conference on Learning Representations},
  year          = {2020}
}

@article{raffel2020exploring,
  title         = {Exploring the Limits of Transfer Learning with a Unified Text-to-Text Transformer},
  author        = {Raffel, Colin and Shazeer, Noam and Roberts, Adam and Lee, Katherine and Narang, Sharan and Matena, Michael and Zhou, Yanqi and Li, Wei and Liu, Peter J.},
  journal       = {Journal of Machine Learning Research},
  volume        = {21},
  number        = {140},
  pages         = {1--67},
  year          = {2020}
}

@article{robertson2009probabilistic,
  title         = {The Probabilistic Relevance Framework: {BM25} and Beyond},
  author        = {Robertson, Stephen and Zaragoza, Hugo},
  journal       = {Foundations and Trends in Information Retrieval},
  volume        = {3},
  number        = {4},
  pages         = {333--389},
  year          = {2009}
}

@inproceedings{shaham2023zeroscrolls,
  title         = {{ZeroSCROLLS}: A Zero-Shot Benchmark for Long Text Understanding},
  author        = {Shaham, Uri and Ivgi, Maor and Efrat, Avia and Berant, Jonathan and Levy, Omer},
  booktitle     = {Findings of the Association for Computational Linguistics: EMNLP},
  year          = {2023},
  eprint        = {2305.14196},
  archiveprefix = {arXiv},
  primaryclass  = {cs.CL}
}

@inproceedings{shi2023large,
  title         = {Large Language Models Can Be Easily Distracted by Irrelevant Context},
  author        = {Shi, Freda and Chen, Xinyun and Misra, Kanishka and Scales, Nathan and Dohan, David and Chi, Ed H. and Scharli, Nathanael and Zhou, Denny},
  booktitle     = {International Conference on Machine Learning},
  pages         = {31210--31227},
  year          = {2023}
}

@article{su2024roformer,
  title         = {RoFormer: Enhanced Transformer with Rotary Position Embedding},
  author        = {Su, Jianlin and Lu, Yu and Pan, Shengfeng and Murtadha, Ahmed and Wen, Bo and Liu, Yunfeng},
  journal       = {Neurocomputing},
  volume        = {568},
  pages         = {127063},
  year          = {2024}
}

@inproceedings{vaswani2017attention,
  title         = {Attention Is All You Need},
  author        = {Vaswani, Ashish and Shazeer, Noam and Parmar, Niki and Uszkoreit, Jakob and Jones, Llion and Gomez, Aidan N. and Kaiser, Lukasz and Polosukhin, Illia},
  booktitle     = {Advances in Neural Information Processing Systems},
  volume        = {30},
  year          = {2017}
}

@inproceedings{xiao2024streamingllm,
  title         = {Efficient Streaming Language Models with Attention Sinks},
  author        = {Xiao, Guangxuan and Tian, Yuandong and Chen, Beidi and Han, Song and Lewis, Mike},
  booktitle     = {International Conference on Learning Representations},
  year          = {2024}
}

@inproceedings{zaheer2020bigbird,
  title         = {Big Bird: Transformers for Longer Sequences},
  author        = {Zaheer, Manzil and Guruganesh, Guru and Dubey, Avinava and Ainslie, Joshua and Alberti, Chris and Ontanon, Santiago and Pham, Philip and Ravula, Anirudh and Wang, Qifan and Yang, Li and Ahmed, Amr},
  booktitle     = {Advances in Neural Information Processing Systems},
  volume        = {33},
  pages         = {17283--17297},
  year          = {2020}
}

@inproceedings{zhang2024found,
  title         = {Found in the Middle: How Language Models Use Long Contexts Better via Plug-and-Play Positional Encoding},
  author        = {Zhang, Zhenyu and Chen, Runjin and Liu, Shiwei and Yao, Zhewei and Ruwase, Olatunji and Chen, Beidi and Wu, Xiaoxia and Wang, Zhangyang},
  booktitle     = {Advances in Neural Information Processing Systems (NeurIPS)},
  year          = {2024},
  eprint        = {2403.04797},
  archiveprefix = {arXiv},
  primaryclass  = {cs.CL}
}

@article{grattafiori2024llama3,
  title   = {The Llama 3 Herd of Models},
  author  = {Grattafiori, Aaron and Dubey, Abhimanyu and Jauhri,
             Abhinav and others},
  journal = {arXiv preprint arXiv:2407.21783},
  year    = {2024}
}

@misc{izacard2021contriever,
  title  = {Unsupervised Dense Information Retrieval with Contrastive Learning},
  author = {Izacard, Gautier and Caron, Mathilde and Hosseini, Lucas and Riedel, Sebastian and Bojanowski, Piotr and Joulin, Armand and Grave, Edouard},
  year   = {2021},
  note   = {arXiv:2112.09118},
  doi    = {10.48550/arXiv.2112.09118}
}

@inproceedings{yang2018hotpotqa,
  title     = {{HotpotQA}: A Dataset for Diverse, Explainable Multi-hop
               Question Answering},
  author    = {Yang, Zhilin and Qi, Peng and Zhang, Saizheng and
               Bengio, Yoshua and Cohen, William W. and
               Salakhutdinov, Ruslan and Manning, Christopher D.},
  booktitle = {Proceedings of the 2018 Conference on Empirical Methods
               in Natural Language Processing},
  pages     = {2369--2380},
  year      = {2018}
}
